\newtheorem{example}{Example}
\newtheorem{theorem}{Theorem}
\theoremstyle{definition}
\newtheorem{definition}{Definition}
\newcommand{\etal}{\textit{et al. }}
\renewcommand{\Pr}{\mathrm{Pr}}
\newcommand*{\addFileDependency}[1]{
    \typeout{(#1)}
    \@addtofilelist{#1}
    \IfFileExists{#1}{}{\typeout{No file #1.}}
}\makeatother
\title{Explainable Reinforcement Learning via a Causal World Model}
\author{
    Zhongwei Yu$^1$
    \and
    Jingqing Ruan$^1$\And
    Dengpeng Xing$^1$\thanks{Corresponding author.}
    \affiliations
    $^1$Institute of Automation, Chinese Academy of Sciences\\
    \emails
    \{yuzhongwei2021, ruanjingqing2019, dengpeng.xing\}@ia.ac.cn
}
\begin{document}

\maketitle

\begin{abstract}
Generating explanations for reinforcement learning (RL) is challenging as actions may produce long-term effects on the future. In this paper, we develop a novel framework for explainable RL by learning a causal world model without prior knowledge of the causal structure of the environment. The model captures the influence of actions, allowing us to interpret the long-term effects of actions through causal chains, which present how actions influence environmental variables and finally lead to rewards. Different from most explanatory models which suffer from low accuracy, our model remains accurate while improving explainability, making it applicable in model-based learning. As a result, we demonstrate that our causal model can serve as the bridge between explainability and learning.
\end{abstract}

\section{Introduction}
Many real-world applications like finance and healthcare require AI systems to be well understood by users due to the demand for safety, security, and legality \cite{gunning_xaiexplainable_2019}. Aiming to help people better understand and work with AI systems, the field of Explainable AI (XAI) has recently attracted increasing interest from researchers. For example, a number of explanatory tools have been developed to pry into the black box of deep neural networks \cite{bach_pixel-wise_2015,selvaraju_grad-cam_2020,wang_shapley_2021}.

However, the domain of explainable reinforcement learning (XRL) has been neglected for a long time. Many XRL studies adopt classic tools of XAI such as saliency maps \cite{nikulin_free-lunch_2019,joo_visualization_2019,shi_self-supervised_2021}. These tools are not designed for sequential decision-making and are weak in interpreting the temporal dependencies of RL environments. Therefore, some studies investigate explaining specific components of the decision process, e.g., observations \cite{koul_learning_2018,raffin_decoupling_2019}, actions \cite{fukuchi_autonomous_2017,yau_what_2020}, policies \cite{amir_highlights_2018,coppens_distilling_2019}, and rewards \cite{juozapaitis_explainable_2019}. However, these studies rarely combine explanations with the dynamics of environments, which is important for understanding the long-term effects produced by agents' actions. In addition, real-world environments usually contain dynamics unknown to users, making it crucial to interpret these dynamics using explanatory models. Model-based RL (MBRL) uses predictive world models \cite{nagabandi_neural_2018,kaiser_model-based_2020,janner_when_2021} to capture such dynamics. However, these models are usually densely-connected neural networks and cannot be used for the purpose of explanation.

Psychological research suggests that people explain the world through causality \cite{sloman_causal_2005}. In this paper, we propose a novel framework that uses an interpretable world model to generate explanations. Rather than using a dense and fully-connected model, we perform causal discovery to construct a sparse model that is aware of the causal relationships within the dynamics of environments. In order to explain agents' decisions, the proposed causal model allows us to construct causal chains which present the variables causally influenced by the agent's actions. The proposed model advances the existing work that uses causality for explainable RL \cite{madumal_explainable_2020,madumal_distal_2020}, as it does not require a causal structure provided by domain experts, and is applicable to continuous action space.

Apart from interpreting the world, humans also use causality to guide their learning process \cite{cohen_rational_2020}. However, the trade-off between interpretability and performance \cite{gunning_xaiexplainable_2019,holzinger_explainable_2020,puiutta_explainable_2020} indicates that explainable models are usually inaccurate and can hardly benefit learning. On the contrary, our model is sufficiently accurate, leading to a performance close to dense models in MBRL. Therefore, we can train the agent and explain its decisions through exactly the same model, making explanations more faithful to the agent's intention. This is significant for overcoming the issue that post-hoc explanations like saliency maps can sometimes fail to faithfully unravel the decision-making process \cite{atrey_exploratory_2020}. 

Our main contributions are as follows: 1) We learn a causal model that captures the environmental dynamics without prior knowledge of the causal structure. 2) We design a novel approach to effectively extract the causal influence of actions, allowing us to derive causal chains for explaining the agent's decisions. 3) We show that our explanatory model is accurate enough to guide policy learning in MBRL.

\section{Background}
\subsection{Causality in Reinforcement Learning}

RL integrated with causality has recently become noticed by RL researchers. For example, Lu \shortcite{lu_deconfounding_2018}, Wang \shortcite{wang_provably_2021} and Yang \shortcite{yang_training_2022} \etal use causal inference to improve the robustness against confounders or adversarial intervention; Dietterich \etal \shortcite{dietterich_discovering_2018} improve learning efficiency by removing the variables unrelated to the agent's action; Nair \etal \shortcite{nair_causal_2019} construct a causal policy model; Seitzer \etal \shortcite{seitzer_causal_2021} improve exploration by detecting the causal influence of actions; Wang \shortcite{wang_causal_2022} and Ding \shortcite{ding_generalizing_2022} \etal investigate causal world models as we do. However, they mainly use causality to improve generalization rather than generate explanations.

Only a few studies consider improving explainability using a causal model. Madumal \etal \shortcite{madumal_explainable_2020} propose the Action Influence Model (AIM), a causal model specialized for RL, to generate explanations about the agent's actions. Another study \cite{madumal_distal_2020} further combines the AIM with decision trees to improve the quality of explanations. However, these approaches require finite action space, and the causal structure is given beforehand by human experts. In addition, they only consider a low-accuracy model, which cannot be used for policy learning. Volodin~\shortcite{volodin_causeoccam_2021} proposes a method to learn a sparse causal graph of hidden variables abstracted from high-dimensional observations, which improves the understanding of the environment. However, the approach provides no insight into the agent's behavior.

\subsection{Structural Causal Model}

A \textit{Structrual Causal Model} (SCM) \cite{pearl_causal_2016}, denoted as a tuple $(\mathcal{U}, \mathcal{V}, \mathcal{F})$, formalizes the causal relationships between multiple variables. $\mathcal{U}$ and $\mathcal{V}$ contain the variables of the SCM, where $\mathcal{U} = \{u_1, ..., u_p\}$ is the set of \textit{exogenous variables} and $\mathcal{V} = \{v_1, ..., v_q\}$ is the set of \textit{endogenous variables}. $\mathcal{F} = \{f^1, ..., f^q\}$ is the set of \textit{structural equations}, where $f^j$ formulates how $v_j$ is quantitatively decided by other variables. We call $f^j$ a ``structural" equation as it defines the subset of variables denoted as $PA(v_j) \subseteq \mathcal{U} \cup \mathcal{V} \setminus \{v_j\}$ (i.e., the parent variables) that directly decide the value of $v_j$.

An SCM is usually represented as a directed acyclic graph (DAG) $\mathcal{G} = (\mathcal{U}\cup\mathcal{V}, \mathcal{E})$ called the \textit{causal graph}. The node set of $\mathcal{G}$ is exactly $\mathcal{U}\cup\mathcal{V}$ and the edge set $\mathcal{E}$ is given by the structural equations: $(x_i, v_j) \in \mathcal{E} \Leftrightarrow x_i \in PA(v_j)$, where $x_i\in \mathcal{U}\cup \mathcal{V}$. For simplicity, symbols like $u_i$, $v_j$, and $x_i$ may denote the names or the values of the variables according to the context. In addition, we consider stochastic structural equations, where $f^j$ outputs the posterior distribution of $v_j$ conditioned on its direct parents $PA(v_j)$:
\begin{equation}
\mathrm{Pr}(v_j|PA(v_j)) \sim f^j(PA(v_j)).
\end{equation}

\subsection{Action Influence Model}

An AIM, denoted as the tuple $(\mathcal{U}, \mathcal{V}, \mathcal{F}, A)$, is a causal model specialized for RL. Here, $\mathcal{U}$ and $\mathcal{V}$ follow the same definition in SCM. $\mathcal{F}$ is the set of structural equations, and $A$ is the action space of the agent. Different from SCM, each structural equation is related to not only an endogenous variable but also a unique action in $A$. In other words, there exists a structural equation $f^j_{\bm{a}} \in \mathcal{F}$ for any action $\bm{a} \in A$ and any endogenous variable $v_j \in V$ to describe how $v_j$ is causally determined under action $\bm{a}$. We use $PA_{\bm{a}}(v_j) \subseteq \mathcal{U} \cup \mathcal{V} \setminus \{v_j\}$ to denote the causal parents of $v_j$ under action $\bm{a}$. Then, the posterior distribution of $v_j$ under action $\bm{a}$ is given by
\begin{equation}
\mathrm{Pr}(v_j|PA_{\bm{a}}(v_j), \bm{a}) \sim f^j_{\bm{a}}(PA_{\bm{a}}(v_j)).
\end{equation}
As a result, there exist overall $|A|\times|\mathcal{V}|$ causal equations in the AIM. In fact, we may also reckon the AIM as an ensemble of $|A|$ SCMs, where each SCM accounts for the influence of a unique action in $A$.

\section{Factorized MDP}
We are interested in tasks where the action and state can be factorized into multiple variables, and formalize such a task as a Factorized MDP (FMDP) denoted by the tuple $\left< S,A,O,R,P,T,\gamma\right>$. Here, $S$, $A$, and $O$ respectively denote the \textit{state space}, \textit{action space}, and \textit{outcome space}. Each state $\bm{s}\in S$ is factorized into $n_s$ state variables such that $\bm{s} = (s_1, ..., s_{n_s})$, where $s_i$ is the $i$-th state variable. Similarly, we have $\bm{a} = (a_1, ..., a_{n_a})$ for each action and $\bm{o}=(o_1,...,o_{n_o})$ for each outcome. Figure \ref{fig:causal_model_illustration}(a) illustrates an example of the factorization for a simple 2-grid environment called the Vacuum world \cite{russell_artificial_2010}. Its state variables include the $position$ of the vacuum and whether the places are clean ($clean_1$ and $clean_2$); it contains only one action variable $a$ that is chosen from Left, Right, and Suck (making the place clean). The action of Left or Right leads to an outcome of failure when blocked by the world boundary.

On each step, the agent observes the current state $\bm{s}$ and takes an action $\bm{a}$, then the state transits and the outcome is produced according to the transition probability $P(\bm{o',s'}|\bm{s,a})$, leading to a \textit{transition tuple} denoted as $\bm{\delta} = (\bm{s,a,o,s'})$. Meanwhile, the reward is given by the overall reward function $R(\bm{\delta})$. Following the reward decomposition \cite{juozapaitis_explainable_2019}, we factorize $R$ as the summation of $n_r$ \textit{reward variables}, given by $R(\bm{\delta}) = \sum_{i=1}^{n_r} r_i(\bm{\delta})$. $\gamma$ is the discount factor for computing returns. $T$ is the termination condition deciding whether the episode terminates based on the transition $\bm{\delta}$. How reward variables $\{r_i\}_{i=1}^{n_r}$ and the termination condition $T$ depend on the transition $\bm{\delta}$ is defined by users according to their demands. In those cases where $R$ and $T$ contain components unknown to users, we may put these components into the outcome variables. That is, we use an outcome variable $o_i$ to indicate the unknown reward, and define the corresponding reward variable as $r_i \equiv o_i$.

\section{The Proposed Framework}
\label{sec:method}

\begin{figure*}[tb]
    \centering
    \includegraphics[width=14cm]{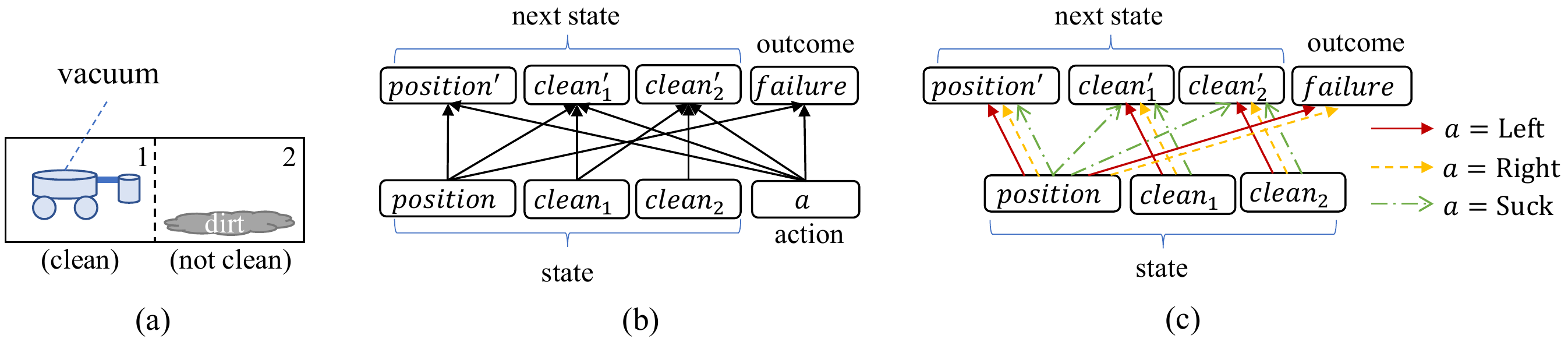}
    \caption{The illustration of causal models in the Vacuum world. (a) illustrates the Vacuum world, where $position=1$, $clean_1=True$, and $clean_2=False$. (b) and (c) respectively illustrate the causal graphs of the SCM and the AIM of the Vacuum world.}
    \label{fig:causal_model_illustration}
\end{figure*}

\begin{figure*}[tb]
	\centering
	\includegraphics[width=12.5cm]{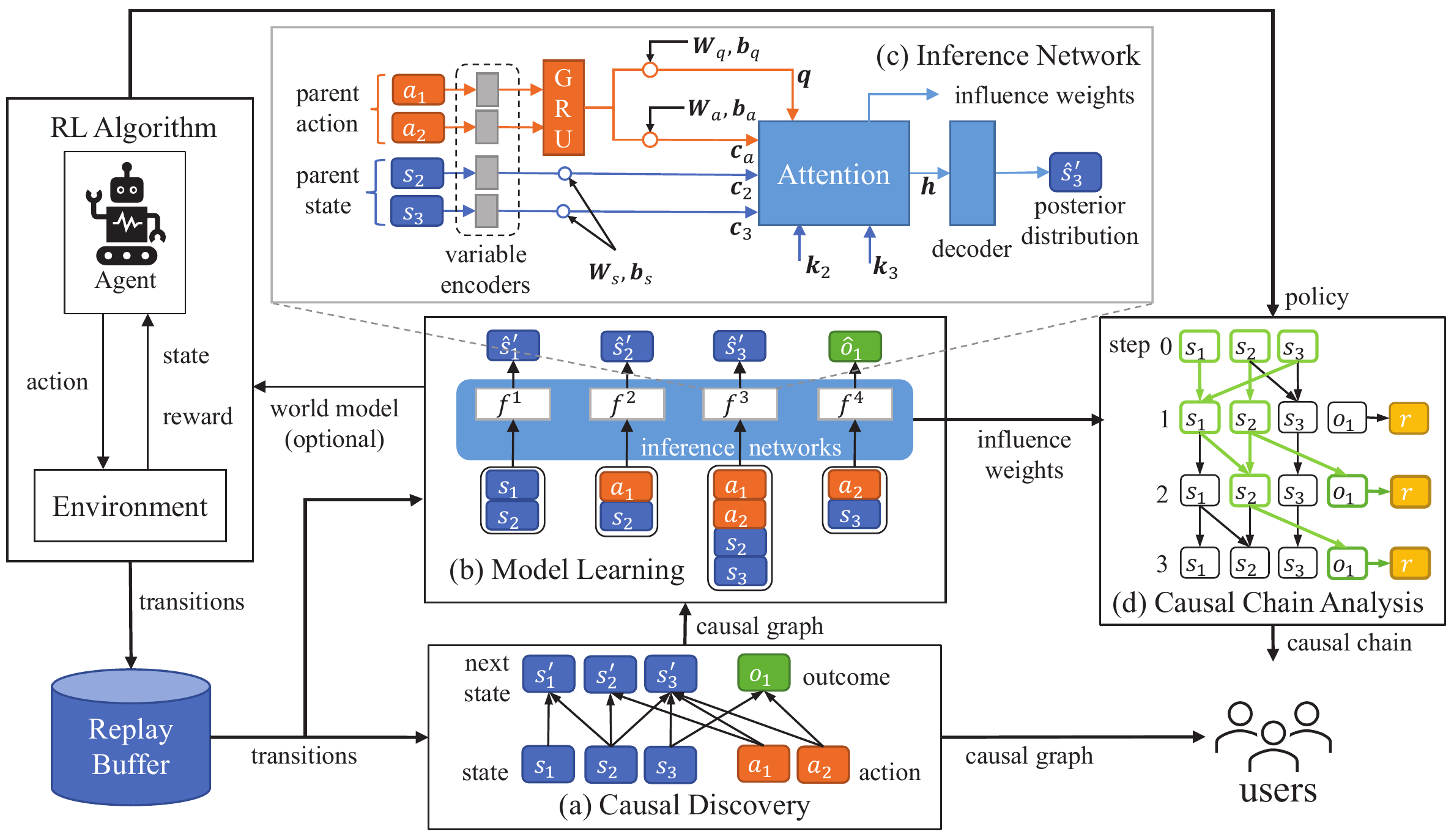}
	\caption{The illustration of the proposed framework. (a) shows an example of the causal graph identified by causal discovery. (b) illustrates the structure of the proposed model. (c) shows the inference network that approximates the structural equation of $s_3'$. (d) illustrates the causal chain analysis, where the causal chain is highlighted in bold and green.}
    \label{fig:framework}
\end{figure*}

There exist two perspectives on the causal model of the dynamics of the environment: 1) In a unified SCM, action variables are merely nodes of the causal structure and are treated evenly as state variables. 2) In an AIM, each action specifies a peculiar causal structure, leading to a good understanding of both the environment and the agent's decisions \cite{madumal_explainable_2020}. To make it clearer, Figure \ref{fig:causal_model_illustration} illustrates how a unified SCM and an AIM pertain to our setting in the above-mentioned Vacuum world. However, directly learning an AIM is intractable as we must divide data into $|A|$ subsets to respectively learn $|A|$ SCMs. This reduces sample efficiency, produces redundant parameters, and cannot be applied to infinite action space. Therefore, we seek to build a unified SCM that can be converted to an AIM based on specially-designed structural equations. As illustrated in Figure~\ref{fig:framework}(a), the exogenous variables in this SCM are the state variables $\bm{s}$ and action variables $\bm{a}$; the endogenous variables are the next-state variables $\bm{s}'$ and outcome variables $\bm{o}$. In the following description, we define $\bm{u} := (\bm{s},\bm{a}) = (s_1, ..., s_{n_s}, a_1, ..., a_{n_a})$ as the input (exogenous) variables and $\bm{v} := (\bm{s}',\bm{o}) = (s_1', ..., s_{n_s}', o_1,...,o_{n_o})$ as the output (endogenous) variables of our model. In this way, a transition tuple can also be written as $\bm{\delta} = (\bm{u},\bm{v})$.

The workflow of the proposed framework is illustrated in Figure~\ref{fig:framework}. When the agent interacts with the environment, we store the transition data into a replay buffer $\mathcal{D}$. Using the stored transitions, we perform \textit{causal discovery} to identify the causal graph of the above-mentioned SCM. Then, we fit the causal equations for the variables of the next state $\bm{s'}$ and the outcome $\bm{o}$ using \textit{Inference Networks}. Together with the known reward function $R$ and termination condition $T$, we construct a causal world model that captures the dynamics of the environment. The attention weights (influence weights) in the inference networks capture the action influence, which allows us to perform \textit{causal chain analysis} to reveal the variables that are causally influenced by the agent's action. The discovered causal graph interprets the environmental dynamics, and the causal chain analysis provides explanations about the agent's decisions. Moreover, this causal world model can be used by MBRL algorithms to facilitate learning.

\subsection{Causal Discovery}
\label{sec:causal_discovery}

We assume that output variables $\bm{v}$ are produced independently conditioned on the input variables $\bm{u}$, as independence underlies the intuition of humans to segment the world into components. Under this assumption, it is proven that the causal graph is bipartite, where no lateral edge exists in $\bm{v}$, and each edge starts in $\bm{u}$ and ends in $\bm{v}$. Therefore, we only need to determine whether there exists a causal edge for each variable pair $(u_i, v_j)$, where $1 \leq i \leq n_s + n_a$ and $1 \leq j \leq n_s+n_o$. Studies have shown that \textit{conditional independent tests} (CITs) can be used to perform efficient causal discovery \cite{wang_causal_2022,ding_generalizing_2022}. In this work, we implement CITs through Fast CIT \cite{chalupka_fast_2018} and determine each edge using the following rule:
\begin{equation} \label{eq:causal_discovery_fmdp}
    u_i \in PA(v_j) \Longleftrightarrow (u_i \nupmodels v_j | \bm{u}_{-i}),
\end{equation}
where $\bm{u}_{-i}$ denotes all variables in $\bm{u}$ other than $u_i$. In Appendix \ref{appendix:causal_discovery}, we provide the theoretical basics of causal discovery and prove a theorem showing that Equation \ref{eq:causal_discovery_fmdp} leads to sound causal graphs.

\subsection{Attention-based Inference Networks}
\label{sec:inference_networks}
To perform causal inference on the discovered causal graph, we fit the structural equation of each output variable $v_j$ using an inference network denoted as $f^j$, which takes the causal parents $PA(v_j)$ as inputs and predicts the posterior distribution $\textrm{Pr}(v_j|PA(v_j))$. These inference networks should adapt to the structural changes of the causal graph, as the agent's exploratory behaviors may reveal undiscovered causal relationships and lead to new causal structures. To achieve this, Ding \etal \shortcite{ding_generalizing_2022} use Gated Recurrent Unit (GRU) networks that sequentially input all parent variables without discriminating the state and the action. To model the action influence, we design the attention-based inference networks as illustrated in Figure \ref{fig:framework}(c).

To handle heterogenous input variables (which may be scalars or vectors of different lengths), we first use \textit{variable encoders} (each uniquely belongs to an input variable) to individually map the input variables to vectors of the same length. These encoders are shared by the inference networks of all output variables. In particular, we use $\tilde{\bm{u}} = \{\tilde{s}_1, ...,\tilde{s}_n,\tilde{a}_1, ...,\tilde{a}_m\}$ to denote these encoding vectors of input variables.

Then, for each inference network $f^j$, we compute the \textit{contribution vectors} of the parent state variables through linear transforms:
\begin{equation}
    \bm{c}^j_{i} = \bm{W}_s^j \tilde{s}_i + \bm{b}_s^j,\quad s_i\in Pa(v_j).
\end{equation}
The usage of these contribution vectors is equal to the ``value vectors" in key-value attention. We use the term ``contribution vectors" since the word ``value" is ambiguous in the context of RL.

Each inference network $f^j$ contains a GRU network $g^j$, which receives the action variables in $PA(v_j)$ and outputs the action embedding $\bm{e}^j$. Then, we feed this action embedding into linear transforms to respectively obtain the query vector $\bm{q}^j$ and the action contribution vector $\bm{c}^j_a$:
\begin{gather}
    \bm{e}^j = GRU^j\left(
        \{\tilde{a}_i\}_{a_i\in PA(v_j)}
    \right), \\
    \bm{q}^j = \bm{W}^j_q \bm{e}^j + \bm{b}^j_q, \\
    \bm{c}^j_{a} = \bm{W}^j_a \bm{e}^j + \bm{b}^j_a .
\end{gather}
The projection matrices $\bm{W}^j_s, \bm{W}^j_q, \bm{W}^j_a$ and bias vectors $\bm{b}^j_s, \bm{b}^j_q, \bm{b}^j_a$ are all trainable parameters of $f^j$. We use the superscript $j$ to indicate that these parameters belong to $f^j$.

Each state variable $s_i$ is allocated a key vector $\bm{k}_i$, which is a trainable parameter learned by gradient descent. We do not use the superscript $j$ for key vectors as they are shared by inference networks of all output variables. The \textit{influence weights} (i.e., attention weights) of the state variables in $PA(v_j)$ and the action are then computed by
\begin{equation}
    \alpha_{i}^j = \frac{\exp(\bm{k}_{i}^T \bm{q}^j)}
    {1 + \sum_{s_{i'}\in PA(v_j)} \exp(\bm{k}_{i'}^T \bm{q}^j)},
\end{equation}
\begin{equation}
    \alpha_{a}^j = \frac{1}
    {1 + \sum_{s_{i'}\in PA(v_j)} \exp(\bm{k}_{i'}^T \bm{q}^j)}.
\end{equation}
We then compute the hidden representation of the posterior distribution using the weighted sum of the value vectors:
\begin{equation}
    \bm{h}^j = \sum_{s_i\in PA(v_j)} \alpha_{i}^j \cdot \bm{c}^j_{i} +
    \alpha_{a}^j \cdot \bm{c}^j_a .
\end{equation}
Finally, the \textit{distribution decoder} $D^j$ maps $\bm{h}^j$ to the predicted posterior distribution:
\begin{equation}
    \textrm{Pr}(v_j|PA(v_j)) \sim D^j(\bm{h}^j).
\end{equation}
We assume the type of this posterior distribution is previously known and $D^j$ only outputs the parameters of the distribution. In our implementation, we use normal distribution (parameterized by the mean and variance) for real-number variables and use categorical distribution (parameterized by the probability of each class) for discrete variables.

The inference networks $\{f_j\}_{j=1}^{n_s+n_o}$ are trained by maximizing the log-likelihood of the transition data stored in $\mathcal{D}$, written as
\begin{equation}
    \mathcal{L}_{infer} = \sum_{j=1}^{n_s+n_o} \frac{1}{|\mathcal{D}|} \sum_{\bm \delta \in \mathcal{D}} \log \textrm{Pr}(v_j|PA(v_j)).
    \label{eq:loss}
\end{equation} 

\subsection{Causal Chain Analysis}
\label{sec:generating_explanations}
In order to generate explanations, we first describe how our model can be converted to an AIM. Noticing that the key vectors $\{\bm{k}_i\}_{i=1}^n$ are trainable parameters, the influence weights only depend on the numeric value of action variables. Therefore, the influence weight $\alpha^j_i$ captures how much the output variable $v_j$ depends on state variable $s_i$ under the given action $\bm{a}$. In order to generate laconic explanations, we define $PA_{\bm{a}}(v_j) := \{s_i\in PA(v_j)\ |\ \alpha_i^j > \tau \}$ as the parent set of $v_j$ with salient dependencies under the action $\bm{a}$, where $\tau \in [0,1] $ is a given threshold. In this way, we convert the SCM to an AIM, where the structural equation for $v_j$ under action $\bm{a}$ is written as
\begin{equation}
    f^j_{\bm{a}}(PA_{\bm{a}}(v_j)) =
    D^j (\sum_{s_i\in PA_{\bm{a}}(v_j)} \alpha_{i}^j \cdot \bm{c}^j_{i} (s_i) +
    \alpha_{a}^j \cdot \bm{c}^j_a ).
    \label{eq:aim}
\end{equation}

Since we use the AIM for the purpose of explanation, it is tolerable to set a larger threshold, which allows us to ignore parent variables that are not influential enough. Madumal \etal have introduced methods to generate good explanations using an AIM. The key is to build a causal chain containing the variables that (i) are causally affected by the actions, and (ii) causally lead to rewards. A single causal chain starting from state $\bm{s}$ and action $\bm{a}$ leads to the explanation for ``why the agent took $\bm{a}$ on $\bm{s}$". Contrastive explanations for ``why the agent took $\bm{a}$ instead of $\bm{b}$ on $\bm{s}$" can be obtained by comparing the causal chines produced by the factual action $\bm{a}$ and the counterfactual action $\bm{b}$. Details can be found in Appendix \ref{appendix:explain} and the AIM paper \cite{madumal_distal_2020}.

The rest of this section introduces how to derive a causal chain starting from the state $\bm{s}^t$ at step $t$ and an action $\bm{a}^t$ (can be factual or counterfactual) using our model. First of all, we use our model and the agent's policy to simulate the most-likely trajectory $\bm{\delta}^t, \bm{\delta}^{t+1}, ..., \bm{\delta}^{t+H-1}$, where $H$ denotes the number of simulation steps. The symbol $\bm{\delta}^{t+k}$ denotes the transition tuple on step $t+k$, where the actions $\bm{a}^{t+k}$ for $k\geq 1$ are produced by the agent's policy. For a factual causal chain, this simulation is not necessary if factual data of these future states and actions is available.

Then, we build an extended graph containing the state, outcome, and reward variables of these $H$ steps. The edges of this graph accord to the structure of the AIM derived above. That is, if $s_i \in PA_{\bm{a}^{t+k}}(v_j)$ then there exists an edge from $s_{i}^{t+k}$ to $v_{j}^{t+k}$ for all $k = 1, ..., H$. It is worth mentioning that we treat the first transition $\bm{\delta}^t$ differently since $\bm{a}^t$ is exactly the action being explained: If $PA(v_j)\cap \bm{a}^t = \emptyset$, then $v_j^t$ is not affected by the choice of $\bm{a}^t$. In this case, no edge will be established from any state $s_i^t$ to $\bm{v}_j^t$.

Afterward, the explainee may specify the target variables (a subset of reward variables) he/she is interested in. Otherwise, all reward variables will be considered. We perform a graph search from the starting state variables $\bm{s}^t$ and highlight all paths from $\bm{s}^t$ to the target rewards. These paths together form the \textit{causal chain} of action $\bm{a}^{t}$ starting from $\bm{s}^t$. Based on this causal chain, the explanation can be presented as a picture or a natural-language description.

\subsection{Model-based RL}
\label{sec:mbrl}

XAI literature has widely discussed the trade-off between interpretability and performance, which is also reflected in our model. A sparser causal graph (discovered using a smaller threshold $\eta$) is usually easier to read and produces clearer explanations. However, it also enforces the model to infer posterior distributions using less information from input variables $\bm{u}$, leading to inferior accuracy. In Appendix \ref{appendix:proof_tradeoff}, we provide a theorem that formally shows that decreasing the threshold $\eta$ leads to a denser causal graph (i.e., lower interpretability) and also higher predicting accuracy.

In order to show our model is accurate enough to do more than generate post-hoc explanations, we consider applying our world model to MBRL to facilitate policy learning. We use a bootstrap ensemble containing 5 models to alleviate the effect of the epistemic uncertainty \cite{chua_deep_2018}. For each iteration, we first collect real transition data into the model buffer $\mathcal{D}$. Then, we update the world model by causal discovery and fitting structural equations using the data in $\mathcal{D}$. Afterward, we perform $k$-step model-rollouts \cite{janner_when_2021} to generate simulated data for updating the policy.  In our implementation, the policy is trained using Proximal Policy Optimization \cite{schulman_proximal_2017}. The pseudo-code of the learning procedure is given in Appendix \ref{appendix:alg_mbrl}.

\section{Experiments}
We present examples of causal chains in two representative environments: Lunarlander-Continuous for the continuous action space, and Build-Maine for the discrete action space. To verify whether our approach can produce correct causal chains, we design an environment to measure the accuracy of recovering causal dependencies of the ground-truth AIM. To evaluate the performance of our model in MBRL, we perform experiments in two extra environments: Cartpole and Lunarlander-Discrete. The Build-Marine environment is adapted from one of the StartCraftII mini-games in SC2LE \cite{samvelyan_starcraft_2019}; the Cartpole and Lunarlander environments are classic control problems provided by OpenAI Gym \cite{brockman_openai_2016}. Our source code is available at 
\href{https://github.com/EaseOnway/Explainable-Causal-Reinforcement-Learning}{https://github.com/EaseOnway/Explainable-Causal-Reinforcement-Learning}.

\subsection{Explanation Results}

\begin{figure}[tb]
    \centering
    \subfigure[Lunarlander-Continuous]{
        \includegraphics[height=2.8cm]{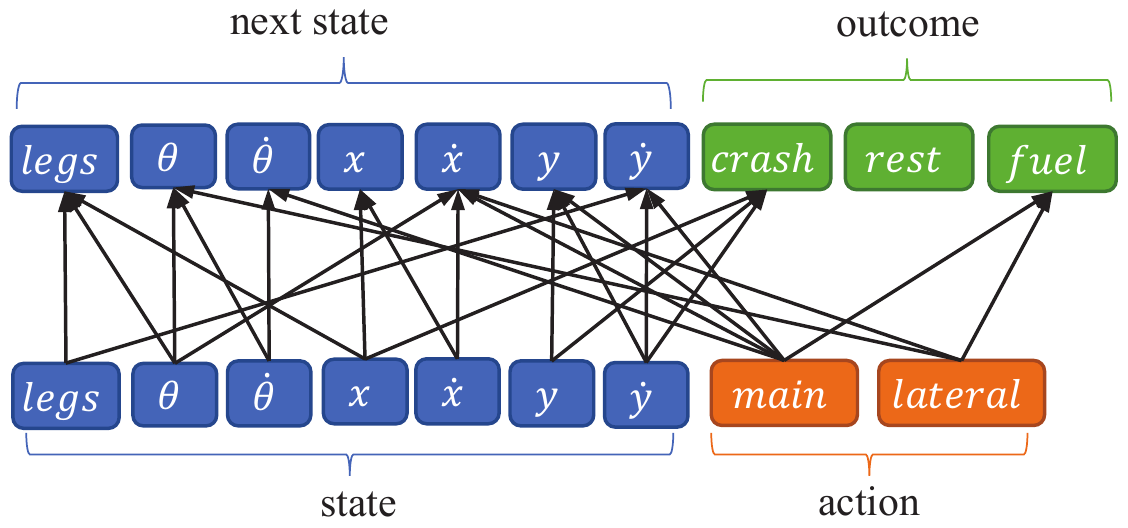}
        \label{fig:lunarlander_graph}
    }

    \centering
    \subfigure[Build-Marine]{
        \includegraphics[height=2.8cm]{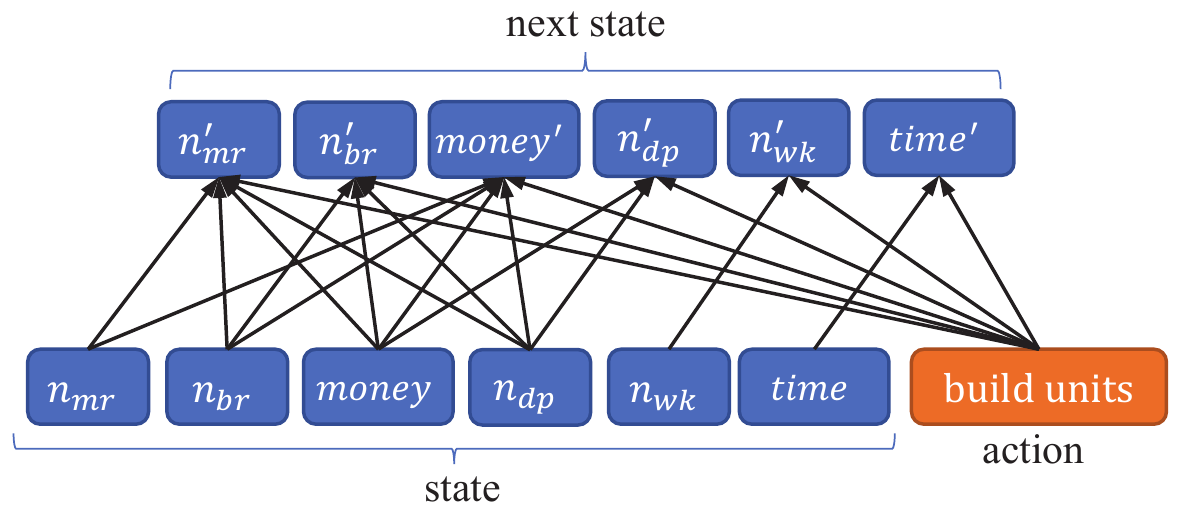}
        \label{fig:buildmarine_graph}
    }
    \caption{The discovered causal graphs of two environments.}
    \label{fig:explain_graphs}
\end{figure}

\begin{figure*}[tb]
    \centering
    \includegraphics[width=13cm]{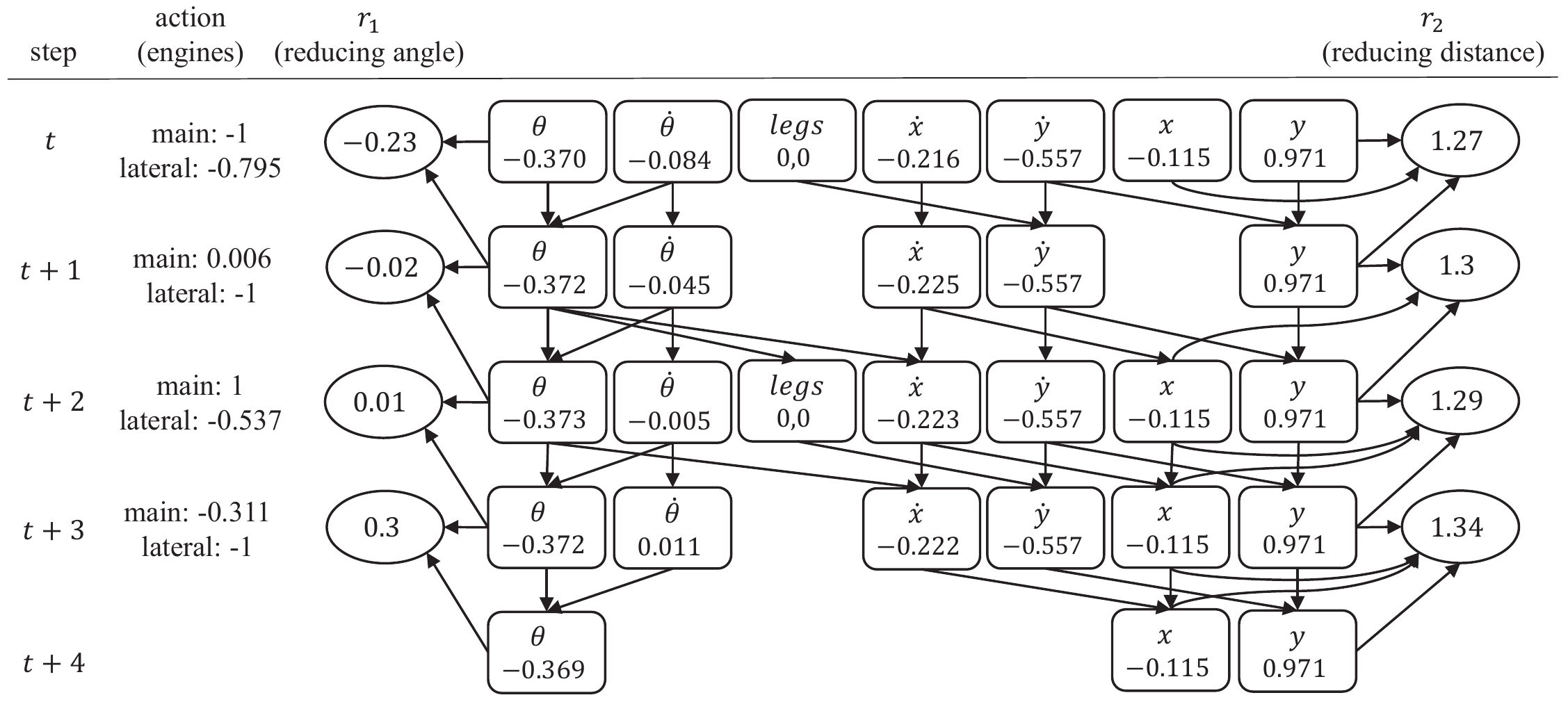}
    \caption{An example of a 4-step causal chain on Lunarlander-Continuous}
    \label{fig:lunarlander_chain}
\end{figure*}

\begin{figure}[tb]
    \centering
    \includegraphics[width=8cm]{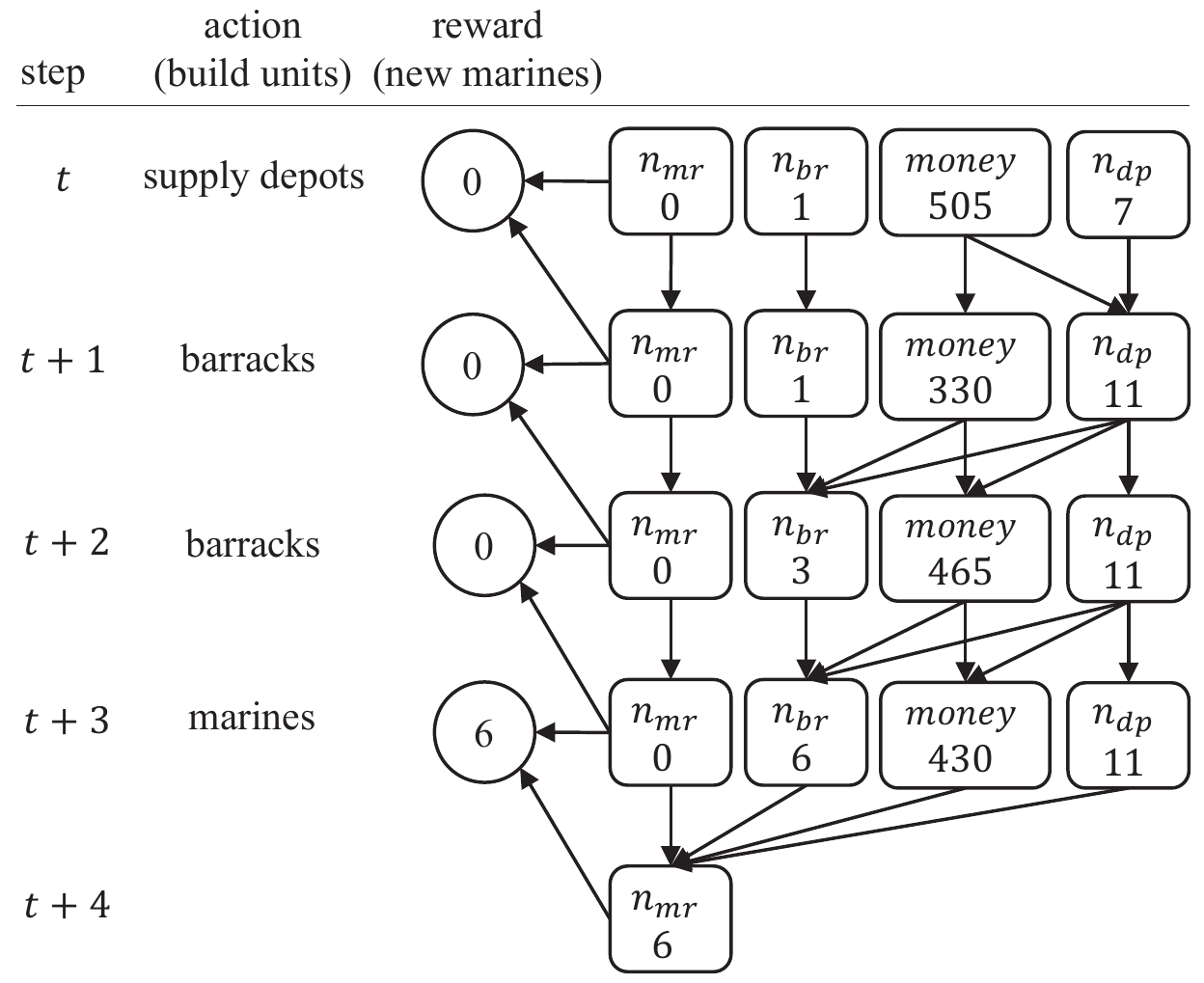}
    \caption{An example of a 4-step causal chain on Build-Marine}
    \label{fig:buildmarine_chain}
\end{figure}

\subsubsection{Lunarlander-Continuous}
We factorize the state into 7 variables $(x, y, \dot{x}, \dot{y}, \theta, \dot{\theta}, legs)$ indicating 1) the horizontal position, 2) the vertical position, 3) the horizontal velocity, 4) the vertical velocity, 5) the angle, 6) the angle velocity, and 7) whether the two legs are in contact with the ground. The action includes 2 continuous variables ranged in $(-1, 1)$, respectively controlling the throttles of the main and the lateral engines. The environment contains 3 outcome variables, including 1) the fuel cost due to firing the engine, 2) whether the lander crashed, and 3) whether the rocket is resting.

In this experiment, we learn a post-hoc model to generate explanations for a previously trained policy. We first use the policy (with noise) to collect 150k samples into the buffer $D$. Then, we use these samples to discover the causal graph (with the threshold $\eta=0.05$) and train the inference networks. The resulting causal graph is presented in Figure~\ref{fig:lunarlander_graph}. The environment contains many kinds of rewards, leading to complicated causal chains if we consider them all. To make our explanation clearer, we present a causal chain in Figure~\ref{fig:lunarlander_chain} considering only two kinds of rewards: 1) the reduction of the distance to the target location, and 2) the reduction of the angle (i.e., balancing the rocket). This causal chain shows that the agent's action $\bm{a}^t$ first influences the velocities ($\dot{\theta}$, $\dot{x}$, and $\dot{y}$) and thereby reduces (or increases) the angle ($\theta$) and the distance ($\sqrt{x^2 + y^2}$). In addition, we observe that no parent of the outcome variable $rest$ is discovered in the causal graph. This means the policy provides insufficient opportunities to reveal its causality. As a result, the variable $rest$ is excluded from all causal chains, showing that it is not an important consideration in the agent's policy.

\subsubsection{Build-Marine}
\label{sec:build_marine_result}

The original observation space provided by the SC2LE interface contains hundreds of variables, which is intractable for causal discovery. In our implementation, we define the state as the tuple containing only 6 variables denoted as $(n_{wk}, n_{mr}, n_{br}, n_{dp}, money, time)$, namely 1) the number of workers, 2) the number of marines, 3) the number of barracks, 4) the number of supply depots, 5) the amount of money, and 6) the game time. We are interested in the macro-level decision-making and therefore define the action as one discrete variable indicating which unit (workers, marines, barracks, supply depots, or none) to be built. The micro-level control of building these units (e.g., determining where to place the new barracks) is implemented by simple rules. The goal is to produce as many marines as possible within 15 minutes. Therefore, the player is rewarded with $1$ for every newly produced marine. In addition, this environment contains no outcome variable.

In this experiment, both the policy and the causal model are obtained by model-based learning (see Section \ref{sec:mbrl}). We present the final causal graph in Figure~\ref{fig:buildmarine_graph} (discovered using the threshold $\eta=0.15$) and an example of the causal chain in Figure~\ref{fig:buildmarine_chain}. The causal chain shows that our attention-based inference networks successfully reason the causal dependencies under different actions, which reflect the following rules of the StarCraftII game: 1) Building new barracks requires at least one supply depot; 2) marines are built from barracks; 3) the number of marines is limited by the number of supply depots; and 4) building more units requires sufficient money in hand. This causal chain explains the reason why the agent builds supply depots: to gain permission to build barracks and provide enough supplies for building marines. Interestingly, we discover no causal relationship between $n_{wk}$ and $money'$. For human players, it is common sense that more workers increase the efficiency of collecting minerals and thus lead to a higher income. Since the causal model is learned using the transition data produced along with policy training, this missing edge indicates that the agent explored inadequately for building more workers, providing insufficient evidence to reveal this causal relation.

\subsection{Accuracy of Recovering Action Influence}
\label{sec:accuracy_aim}

\begin{table}[]
    \centering
    \begin{tabular}{c|ccc}
        \hline
                     & Direct & Full+Attn & Caus+Attn (ours)  \\
        \hline
        AIM accuracy & 97.0 \% & 90.0\% & \textbf{99.3\%} \\
        \hline
    \end{tabular}
    \caption{The accuracy of recovering the causal dependencies of the AIM. ``Direct" means the direct approach mentioned in Section \ref{sec:accuracy_aim}; ``Full" means using a full graph; ``Caus" means using a causal graph; and ``Attn" means using attention.}
    \label{tab:aim_accuracy}
\end{table}

Good explanations are generated from correct causal chains, which require us to accurately recover the AIMs of environments. In Section \ref{sec:method}, we have mentioned a \textit{Direct} approach that learns the AIM by splitting the data buffer $\mathcal{D}$ into $|\mathcal{A}|$ sub-buffers and performing causal discovery for each action $a\in \mathcal{A}$. Though this direct approach is theoretically sound, it suffers from poor sample efficiency and high computational complexity. Noticing that the causal dependencies under different actions usually share similar structures, our approach takes 2 stages: 1) In the \textbf{causal stage}, we learn a unified SCM, whose causal graph summarizes causal dependencies for all actions; 2) in the \textbf{attention stage}, we then transfer this SCM into an AIM based on attention weights (influence weights).

To verify whether our approach can accurately recover the AIM, we design a simple environment that contains spurious correlations to confuse neural networks (see Appendix \ref{appendix:env_aimtest} for details). We compare our approach with two baselines: 1) the Direct approach mentioned above, and 2) a non-causal approach that uses a full causal graph and only relies on attention. The accuracy of recovering the ground-truth causal dependencies of the AIM using non-i.i.d data is shown in Table \ref{tab:aim_accuracy}. These results show that: 1) the causal graph discovered in the causal stage precludes most spurious correlations, making our approach more effective than the Direct approach in practice; 2)  and attention alone is insufficient to accurately extract the causal influence of actions.

Further, we examined the causal chains derived solely from attention (where full causal graphs are used). In these chains, we found plenty of spurious correlations, which lead to unreasonable explanations (e.g., ``the number of supply depots naturally grows with time" in Build-Marine). An example of such a causal chain and the related discussion are provided in Appendix \ref{appendix:causality_case_study}. This result shows that causal discovery is an indispensable process for producing reasonable explanations.

\subsection{Performance in Model-Based RL}

\begin{figure}[tb]
    \centering
    \includegraphics[width=8.5cm]{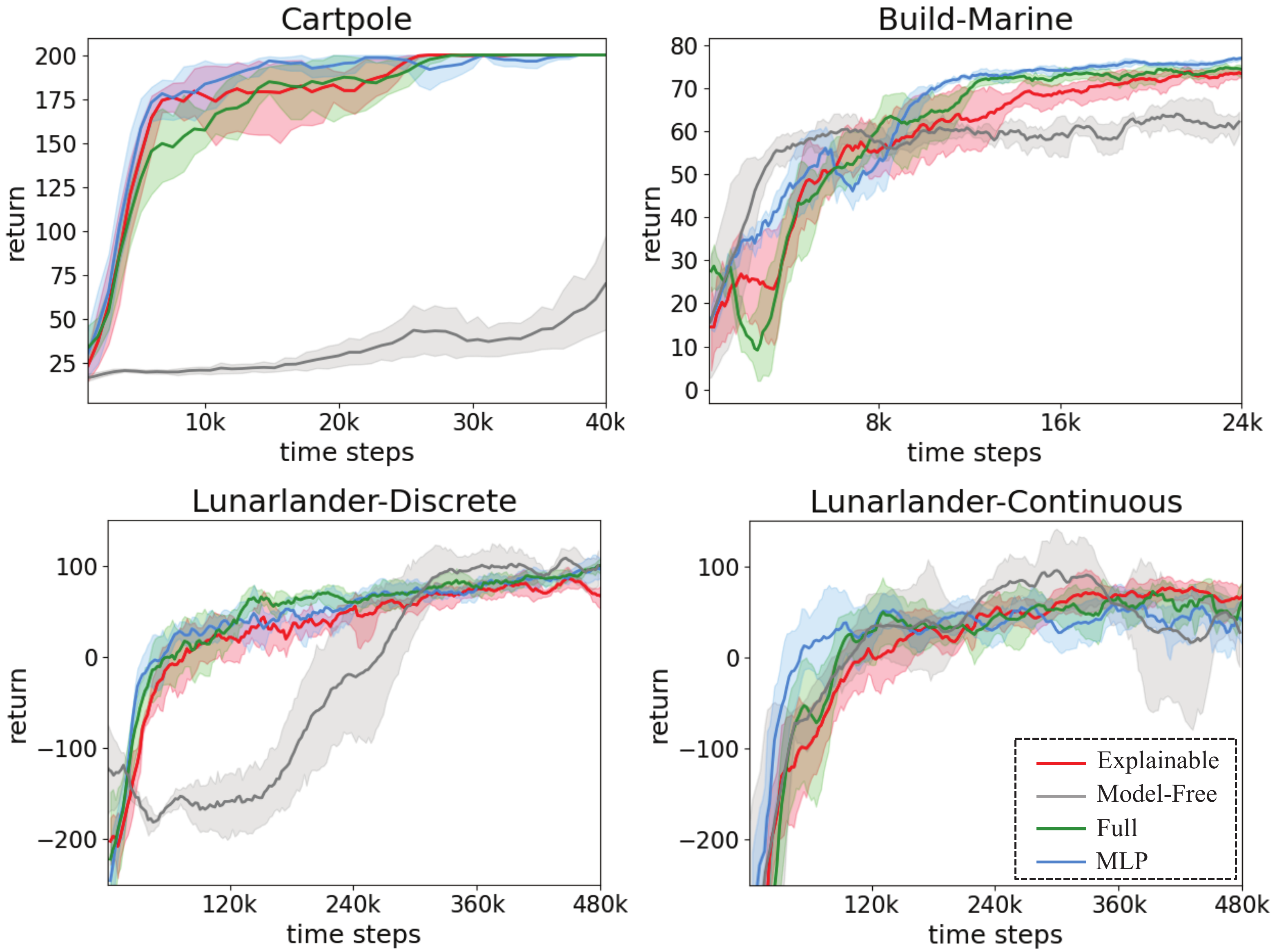}
    \caption{The training curves. Our explainable model (red) is compared to non-explainable dense models (green and blue) to show the performance cost of using a sparse causal graph. The grey curves show the performance without models.}
    \label{fig:mbrl_curve}
\end{figure}

We evaluate the performance of model-based policy learning using our explanatory model.  We compare the learning performance with the \textbf{Model-Free} approach that learns policies without models. In addition, we consider two dense models as baselines: 1) the model that concatenates all exogenous variables $\bm{u}$ as inputs and infers endogenous variables using a Multi-Layer Perceptron (\textbf{MLP}), and 2) the \textbf{Full} model that adopts the same networks as ours whereas uses a full causal graph.

The MLP model is the most commonly used in MBRL, and the Full model follows the state-of-the-art modular architecture \cite{ke_systematic_2021}. These dense models are not suitable for generating explanations. However, they are more accurate (if well-trained) than our explainable model as they are allowed to predict each output variable based on the complete inputs. Existing studies show that causal models generalize better than dense models \cite{wang_causal_2022,ding_generalizing_2022}. However, we focus on ordinary learning problems and do not consider using our model for generalization. Therefore, we stress that the goal of this experiment is not to obtain a higher performance than dense baselines. On the contrary, We aim to figure out: 1) whether the proposed model can be of help to the learning process, and 2) how much performance our sparse model sacrifices for improved explainability. 
 
The learning curves are shown in Figure \ref{fig:mbrl_curve}. In all environments, the performance of our explanatory model is very close to the dense baselines. Compared to the model-free approach, the model-based approaches significantly learn faster in Cartpole and Lunarlander-Discrete and converge to higher returns in Build-Marine. The returns of all tested approaches are close in Lunarlander-Continuous, whereas the model-based approaches improve the stability of learning. These results show that our model improves explainability at an acceptable cost in performance and well balances the interpretability-accuracy trade-off. Therefore, our model can simultaneously guide policy learning and explain decisions, leading to better consistency between explanations and the agent's cognition of the environment.

\section{Conclusion and Future Work}
This paper proposes a framework that learns a causal world model to generate explanations about agents' actions. To achieve this, we perform causal discovery to identify the causal structure in the environment and fit causal equations through attention-based inference networks. These inference networks produce the influence weights that capture the influence of actions, which allow us to perform causal chain analysis in order to generate explanations. The proposed framework does not require the structural knowledge provided by human experts and is applicable to infinite action spaces. Apart from generating explanations, we successfully applied our model to model-based RL, showing that the model can be the bridge between learning and explainability.

A weakness of our approach is that it requires a known factorization of the environment, which limited its application scope. There exists a number of studies aiming to learn the causal feature set from raw observations \cite{zhang_invariant_2020,volodin_causeoccam_2021,zhang_learning_invariant_2021}. Future work will put representation learning into consideration for better applicability. In addition, we currently consider model-based policy learning as the usage of our model apart from generating explanations. However, this usage does not make full use of the advantage of a causal model. Future work will investigate better usage of our model to further improve learning.

\section*{Acknowledgments}

This work was supported in part by National Key R$\&$D Program of China (No.2022ZD0116405) and in part by the National Nature Science Foundation of China under Grant (62073324).

\bibliographystyle{named}
\bibliography{references}


\appendix
\section{Causal Discovery}
\label{appendix:causal_discovery}

In this section, we introduce some basics of causal discovery and then prove the soundness of our approach for causal discovery. First of all, we introduce the concept of Markov Compatibility \cite{pearl_causality_2000}, which describes whether a directed acyclic graph (DAG) is able to represent the dependencies of a group of random variables. 

\begin{definition}[Markov Compatibility] \label{def:markov_compatibility} Assume $\bm{x}=(x_1,x_2,\cdots,x_n)$ is a group (ordered set) of random variables and $\Pr$ is a probability function on $\bm{x}$. Assume $\mathcal{G}$ is a DAG whose nodes are these variables, where the parent set of $x_i$ is denoted as $PA(x_i)$. If we have
\begin{equation}
    \Pr(\bm{x}) = \prod_{i=1}^n \Pr(x_i|PA(x_i)),
\end{equation}
then we say that $\Pr$ and $\mathcal{G}$ are \textit{compatible}, or that $\mathcal{G}$ \textit{represents} $\Pr$.
\end{definition}

The goal of causal discovery is to find some DAG $\mathcal{G}$ to represent a given probability $\Pr$. Now, we introduce two important concepts for causal discovery: the d-separation and causal faithfulness.

\begin{definition}[d-separation] \label{def:d-separation} Assume $\mathcal{G}$ is a DAG on a set of variables, where $\bm{x}$, $\bm{y}$, and $\bm{z}$ are disjoint subsets of variables. We say that $\bm{x}$ and $\bm{y}$ is \textit{d-separated} by on $Z$ (denoted as $\bm{x} \upmodels_{\mathcal{G}} \bm{y}|\bm{z}$), if every undirected path $\bm{p}$ from $\bm{x}$ to $\bm{y}$ satisfies:
\begin{enumerate}
    \item There exists a forward chain $a \rightarrow b \rightarrow c$, a backward chain $a \leftarrow b \leftarrow c$, or a fork $a \leftarrow b \rightarrow c$ in $p$ such that $b\in Z$.
    \item For every collision structure $a \rightarrow b \leftarrow c$ in $p$, $Z$ does not contain $b$ or any descendant of $b$.
\end{enumerate}
\end{definition}

\begin{theorem}[d-separation Criterion] \label{theorem:d-separation} Assume $\mathcal{G}$ is a DAG of a set of variables. Assume $\bm{x}$, $\bm{y}$, and $\bm{z}$ are disjoint subsets of variables. The following propositions hold:
\begin{enumerate}
    \item (Global Markov Property \cite{peters_elements_2017}) Assuming $\Pr$ is a probability function such that $\Pr$ and $\mathcal{G}$ are compatible, then
    \begin{equation}
    (\bm{x} \upmodels_{\mathcal{G}} \bm{y}|\bm{z}) 
    \Rightarrow
    (\bm{x} \upmodels_{\Pr} \bm{y}|\bm{z}),
    \end{equation}
    where $\upmodels_{\Pr}$ means conditional independence under $\Pr$.
    \item  If $(\bm{x} \upmodels_{\Pr} \bm{y}|\bm{z})$ holds for every  probability function $\Pr$ that is compatible with $\mathcal{G}$, we have $(\bm{x} \upmodels_{\mathcal{G}} \bm{y}|\bm{z})$ \cite{pearl_causality_2000}.
\end{enumerate}

\end{theorem}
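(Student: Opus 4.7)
My plan is to prove the two parts with quite different techniques: part (1) via a factorization/moralization argument on the ancestral subgraph, and part (2) by contraposition through an explicit construction of a ``pathological'' distribution $\Pr$ compatible with $\mathcal{G}$.

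For part (1), the Global Markov Property, I would first reduce to the ancestral set $A = An(\bm{x} \cup \bm{y} \cup \bm{z})$. Because the factorization in Definition \ref{def:markov_compatibility} respects ancestral closure, marginalizing out the non-ancestors yields a distribution on $A$ compatible with the induced DAG $\mathcal{G}_A$, and d-separation of $\bm{x}$ from $\bm{y}$ by $\bm{z}$ in $\mathcal{G}$ is preserved in $\mathcal{G}_A$ (no active path can leave $A$). Next, I would moralize $\mathcal{G}_A$ to obtain an undirected graph $\mathcal{G}_A^m$ by connecting every pair of parents sharing a common child and dropping edge directions. The classical moralization lemma states that $\bm{x}\upmodels_{\mathcal{G}}\bm{y}\mid\bm{z}$ iff $\bm{z}$ separates $\bm{x}$ from $\bm{y}$ in $\mathcal{G}_A^m$ in the ordinary undirected sense. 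Since each factor $\Pr(x_i\mid PA(x_i))$ in the DAG factorization is a potential on a clique of $\mathcal{G}_A^m$, the standard Hammersley--Clifford-style computation then turns graph separation into the desired conditional independence $\bm{x}\upmodels_{\Pr}\bm{y}\mid\bm{z}$.

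For part (2), I would argue by contraposition: assume there is no d-separation, so there exists an active (d-connecting) path $p$ from some node of $\bm{x}$ to some node of $\bm{y}$ given $\bm{z}$. I must exhibit one $\Pr$ compatible with $\mathcal{G}$ under which $\bm{x}\not\upmodels_{\Pr}\bm{y}\mid\bm{z}$. The idea is to propagate a single binary ``signal'' along $p$: take a fresh Bernoulli(1/2) random seed at the top of every chain/fork segment along $p$, have each chain/fork child deterministically copy the signal from its parent on $p$, and at every collider on $p$ set the child equal to the XOR of its two parents on $p$. For a collider $b$ on $p$ whose activation requires $b$ or one of its descendants to lie in $\bm{z}$, I would further propagate $b$ deterministically along a directed chain down to that descendant in $\bm{z}$. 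All remaining conditional distributions $\Pr(x_i\mid PA(x_i))$ are set to be uniform and independent of everything else. Compatibility with $\mathcal{G}$ is immediate from the DAG factorization, and conditioning on $\bm{z}$ activates every collider along $p$, producing a genuine functional dependence between the endpoints.

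The main obstacle I expect is the collider bookkeeping in part (2): when $p$ contains several colliders whose activating observations live far down among the descendants, one has to route the signal carefully so that all observed colliders (and observed non-collider chain/fork nodes) are consistent with the deterministic equations, while still retaining uncertainty at both endpoints after conditioning on $\bm{z}$. A secondary subtlety is verifying that variables and parent configurations not touched by $p$, together with the ``non-relevant'' arguments of mixed conditional tables, do not accidentally create a second channel that breaks the carefully engineered dependence by averaging it out. In part (1) the only non-routine step is the moralization lemma itself, whose proof is a standard but intricate case analysis over chains, forks, and colliders along undirected paths, and I would simply cite it from \cite{peters_elements_2017} if a fully self-contained proof is not required.
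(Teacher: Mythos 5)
The paper does not actually prove this theorem: it is stated as a classical result and discharged by citation (part 1 to \cite{peters_elements_2017}, part 2 to \cite{pearl_causality_2000}), and only the downstream Theorem \ref{theorem:causal_discovery_fmdp} is proved in the appendix. Your sketch is therefore doing strictly more than the paper, and it follows the standard textbook route: for part 1, restriction to the ancestral set, moralization, and the Hammersley--Clifford-style factorization argument is exactly how the global Markov property is derived from the DAG factorization in Lauritzen/Peters et al.; for part 2, the contrapositive construction that injects an independent Bernoulli seed and propagates it along a single d-connecting path, with XOR at colliders and deterministic relays down to the observed descendants, is the classical completeness argument of Geiger--Verma--Pearl. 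The one place where your sketch as written is not yet a proof is precisely the ``collider bookkeeping'' you flag: when a descendant chain from an activated collider intersects the main path $p$ (or when two such chains share nodes), a single node is asked to carry two deterministic signals at once, and a one-bit alphabet cannot do both. The standard resolutions are either to give each node a product state space with one bit per channel so the signals do not interfere, or to abandon the combinatorial construction entirely in favor of a linear-Gaussian parameterization with generic coefficients, where the partial covariance between the endpoints given $\bm{z}$ is a nonzero polynomial in the edge weights whenever a d-connecting path exists. Either fix closes the gap; your worry about ``averaging out'' through untouched variables is not an issue, since setting all off-path conditionals to be uniform and independent leaves no second channel. Given that the theorem is imported rather than proved in the paper, citing the moralization lemma and the completeness construction, as you propose, is entirely appropriate.
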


\begin{definition}[Causal Faithfulness] \label{def:faithfulness} Assume $\mathcal{G}$ and $\Pr$ are respectively a DAG and a probability function on a set of variables. We say that $\Pr$ is \textit{faithful} to $\mathcal{G}$, if 
\begin{equation}
    (\bm{x} \upmodels_{\Pr} \bm{y}|\bm{z})
    \Rightarrow
    (\bm{x} \upmodels_{\mathcal{G}} \bm{y}|\bm{z}),
\end{equation}
for all disjoint subsets $\bm{x}$, $\bm{y}$, and $\bm{z}$ of variables.
\end{definition}

Causal faithfulness indicates that all conditional independent relationships are due to the structure of the DAG instead of rare coincidence. In fact, studies have shown that if $\Pr$ is compatible with $\mathcal{G}$, the chance of $\Pr$ to be not faithful to $\mathcal{G}$ is extremely low \cite{spirtes_causation_2001}. In causal discovery, it is usually assumed that the probability $\Pr$ is faithful to the DAG $\mathcal{G}$ that we are looking for, which makes the structure of $\mathcal{G}$ recognizable. 

We have not distinguished exogenous and endogenous variables above. The \textit{exogenous variables} are considered the inputs of a system, and thus their causality does not need to be discussed. In other words, our causal graph only describes the causality of \textit{endogenous variables}, whereas the causality of exogenous variables is ignored. Unless otherwise specified, the letter $\bm{v}$ denotes the set of endogenous variables and the letter $\bm{u}$ denotes the set of endogenous variables in the following discussion. Given a probability function $\Pr$ of some variables $\bm{x} = (\bm{u},\bm{v})$, the goal of causal discovery now becomes determining only the causal parents of endogenous variables $\bm{v}$ in a DAG $\mathcal{G}$ that represents $\Pr$.

It is worth mentioning that, in the definition given by Pearl \shortcite{pearl_causal_2016}, it is assumed that exogenous variables are independent of each other. However, this requirement is released in our definition since the current state and action variables are usually correlated due to the dependencies underlying the agent's policy and the history transitions. Ignoring these correlations leads to spurious edges, which are detrimental to generating reasonable explanations.  For example, in Figure \ref{fig:spurious_edges:policy}, we show that a backdoor path occurs when the actions are sampled by a policy dependent on the current state; in Figure \ref{fig:spurious_edges:history} we show that a backdoor path occurs considering the transition history even if the actions are sampled randomly. In both cases, $s_1$ and $s_2'$ are correlated, making $s_1$ an ``spurious parent" of $s_2'$ in the discovered causal graph.

\begin{figure}[h]
    \centering
    \subfigure[]{
      \begin{minipage}[t]{0.4\linewidth}
        \centering
        \includegraphics[width=3cm]{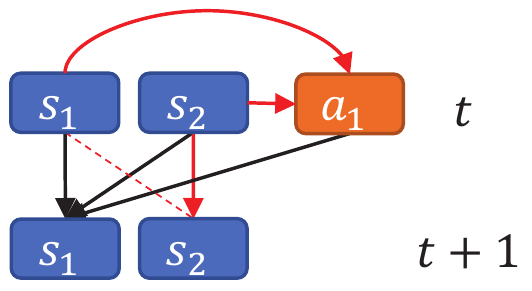}
     \end{minipage}
      \label{fig:spurious_edges:policy}
    }
    \subfigure[]{
        \begin{minipage}[t]{0.4\linewidth}
            \centering
            \includegraphics[width=3cm]{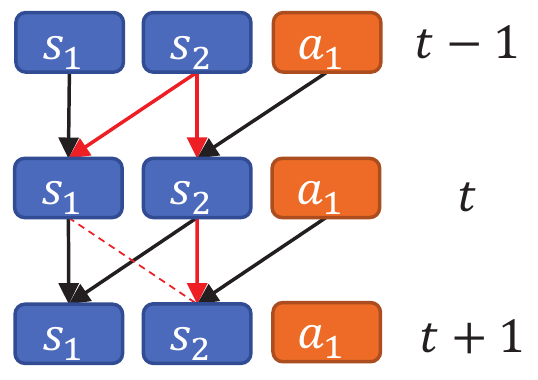}
          \end{minipage}
          \label{fig:spurious_edges:history}
    }
    \caption{two examples of spurious edges. The true causal relations are shown in solid arrows, where the backdoor paths mentioned are highlighted in red. The spurious edges are shown in red dashed lines.}
    \label{fig:spurious_edges}
 \end{figure}

Therefore, we detect causal dependencies using conditional independent tests (CITs), where the condition variables block all the backdoor paths (if there exist any). We assume that state variables transit and outcome variables arise independently, which brings several benefits: 1) The causal graph is bipartite, making our model able to be computed in parallel. 2) The causal graph can be uniquely identified using CITs. Wang \etal \shortcite{wang_causal_2022} adopt a similar approach for causal discovery from the perspective of conditional mutual information. Here, we describe and prove a theorem that is used to discover sound causal graphs in our approach.

\begin{theorem} [Causal Discovery for Factorized MDP] \label{theorem:causal_discovery_fmdp}
    Assume $\bm{u} = (\bm{s},\bm{a})$ and $\bm{v} = (\bm{s}', \bm{o})$ are respectively the sets of exogenous and endogenous variables in a Factorized MDP. Assume that $\Pr$ is a probability function of these variables $\bm{\delta} = (\bm{u},\bm{v})$ such that $\Pr$ is consistent with the MDP; that is, we have
    \begin{equation} \label{eq:consistent_pr}
        \Pr(\bm{\delta}) = \Pr(\bm{s}) \pi(\bm{a}|\bm{s}) P(\bm{v}|\bm{u})
    \end{equation}
    where $\Pr(\bm{s})$ may follow arbitrary state distribution, $\pi$ can be arbitrary policy, and $ P(\bm{v}|\bm{u})$ is the transition probability of the MDP.
    Then, there always exists a DAG $\mathcal{G}$ that represents $\Pr$. Further, if we make the following assumptions:
    \begin{enumerate}
        \item (Independent Transition) next-state variables and outcome variables are produced independently, given by
        $$
            P(\bm{v}|\bm{u}) = \prod_{j=1}^{n_s} P(v_j|\bm{u}) \prod_{k=1}^{n_o} P(o_k|\bm{u});
        $$
        \item (Causal Faithfulness) $\Pr$ is faithful to $\mathcal{G}$,
    \end{enumerate}
    then the following propositions about $\mathcal{G}$ hold:
    \begin{enumerate}
        \item No lateral edge like $v_i \rightarrow v_j$ exists among $\bm{v}$. In other worlds, we have $PA(v_j) \subseteq \bm{u}$ for every $v_j \in \bm{v}$.
        \item The parent sets of $\bm{v}$ are uniquely identified by
        $$
            (u_i \upmodels_{\Pr} v_j | \bm{u}_{-i}) \Leftrightarrow u_i \in PA(v_j)
        $$
        for every $i=1,\cdots n_s+n_a$ and every $j=1,\cdots n_s + n_o$. Here $\bm{u}_{-i}$ denotes $\bm{u} \setminus \{u_i\}$
        \item The parent sets of $\bm{v}$ in $\mathcal{G}$ is invariant. That is, we may replace $\Pr$ with any other probability function $\Pr_*$ that satisfies Eq. \ref{eq:consistent_pr} and obtain a new DAG $\mathcal{G}_*$ that represents $\Pr_*$. Assuming $\Pr_*$ is faithful to $\mathcal{G}_*$, the parent sets of $\bm{v}$ will stays unchanged:
        $$
            PA(v_j) = PA_*(v_j),\quad j=1,\cdots,n_s+n_o
        $$
    \end{enumerate}
\end{theorem}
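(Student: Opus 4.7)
The plan is to prove the four claims in order, leveraging the factorization imposed by the FMDP structure together with the d-separation machinery of Theorem \ref{theorem:d-separation}. For existence, I would construct $\mathcal{G}$ directly from Eq. \ref{eq:consistent_pr}: pick any ordering of $\bm{s}$ and apply the chain rule to $\Pr(\bm{s})$ to fix parent sets among state variables; add edges from $\bm{s}$ into each $a_k$ to realize $\pi(\bm{a}|\bm{s})$; and, using the independent-transition factorization, give each $v_j \in \bm{v}$ the parent set $\bm{u}$. By construction the resulting DAG is compatible with $\Pr$ in the sense of Definition \ref{def:markov_compatibility}.

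For proposition 1, the independent-transition assumption immediately yields $v_i \upmodels_{\Pr} v_j \mid \bm{u}$ for any $i \neq j$ (marginalize the product). Causal faithfulness upgrades this to d-separation $v_i \upmodels_{\mathcal{G}} v_j \mid \bm{u}$. A hypothetical lateral edge $v_i \to v_j$ would be a single-edge path not blocked by any conditioning set that avoids $\{v_i, v_j\}$; in particular $\bm{u}$ fails to block it, contradicting d-separation. Hence $PA(v_j) \subseteq \bm{u}$.

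For proposition 2, I would read the statement in its natural form $u_i \in PA(v_j) \Leftrightarrow u_i \nupmodels_{\Pr} v_j \mid \bm{u}_{-i}$ (consistent with Eq. \ref{eq:causal_discovery_fmdp}). The forward direction is immediate: the direct edge $u_i \to v_j$ is a one-edge path not blocked by $\bm{u}_{-i}$, so by the contrapositive of faithfulness, $u_i \nupmodels_{\Pr} v_j \mid \bm{u}_{-i}$. For the converse, assume $u_i \notin PA(v_j)$ and show d-separation, after which the global Markov property delivers the probabilistic independence. Using proposition 1, $\bm{v}$-nodes are sinks, so every path from $u_i$ to $v_j$ ends with some edge $u_k \to v_j$ with $u_k \in PA(v_j) \setminus \{u_i\}$. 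A case analysis then closes the argument: any intermediate $v_\ell$ on the path must be a collider (since it has only incoming edges), and neither $v_\ell$ nor any of its descendants lies in $\bm{u}_{-i}$, so the collider blocks; any non-collider $u$-node on the path lies in $\bm{u}_{-i}$ and blocks as a chain or fork. This exhausts the possibilities.

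For proposition 3, I would observe that under the independent-transition assumption $\Pr(v_j \mid \bm{u}) = P(v_j \mid \bm{u})$, so the CIT $u_i \upmodels_{\Pr} v_j \mid \bm{u}_{-i}$ reduces to asking whether the transition kernel $P(v_j \mid \bm{u})$ is functionally independent of $u_i$ on the support of $\Pr(\bm{u})$. Since $P$ is a property of the MDP alone and does not involve $\Pr(\bm{s})$ or $\pi$, the identified parent sets are invariant to the choice of policy and state distribution; formally, one picks $\Pr_*$ satisfying Eq. \ref{eq:consistent_pr} with the same $P$ and reruns propositions 1 and 2 to get the same parent sets. The main obstacle is the case analysis in proposition 2; everything else is a clean application of the d-separation criterion once proposition 1 is in hand, so most of the effort will go into making the path enumeration airtight.
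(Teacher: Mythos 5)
Your proposal follows essentially the same route as the paper's proof: a chain-rule construction for existence, a faithfulness contradiction to rule out lateral edges among $\bm{v}$, d-separation combined with the global Markov property and faithfulness for the identification rule, and the policy-invariance of the transition kernel $P(v_j|\bm{u})$ for proposition 3 (which the paper formalizes by integrating out $u_i$ to reach a contradiction). The only nuance is in the existence step: you should take minimal parent sets for the $v_j$ rather than all of $\bm{u}$, since a graph with the full parent set $\bm{u}$ represents $\Pr$ but is generally not one to which $\Pr$ is faithful, and the later propositions assume faithfulness of the constructed $\mathcal{G}$.
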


\begin{proof}
We first prove the existence of $\mathcal{G}$. Since $\pi$ and $P$ in Eq. \ref{eq:consistent_pr} are both conditional probability function, we write that
$$
    \Pr(\bm{\delta}) = \Pr(\bm{s}) \Pr(\bm{a}|\bm{s}) \Pr (\bm{v}|\bm{u})
$$
where $\Pr(\bm{a}|\bm{s}) = \pi(\bm{a}|\bm{s})$ and $\Pr (\bm{v}|\bm{u}) = P(\bm{v}|\bm{u})$. Using the chain rule of probability functions, we have
\begin{align*}
    \Pr(\bm{s}) =& \Pr(s_1)\Pr(s_2|s_1)\cdots \Pr(s_{n_s}|s_1,\cdots,s_{n_s-1}) \\
    \Pr(\bm{a}|\bm{s}) =& \Pr(a_1|\bm{s})\Pr(a_2|a_1,\bm{s})\cdots \Pr(a_{n_a}|a_1,\cdots,a_{n_a-1},\bm{s})\\
    \Pr(\bm{v}|\bm{u}) =& \Pr(v_1|\bm{u})\Pr(v_2|v_1,\bm{u})\cdots \\
    & \qquad \Pr(v_{n_s+n_o}|v_1,\cdots,v_{n_s+n_o-1},\bm{u})
\end{align*}
We define
$$PA(s_i) \subseteq (s_1, \cdots, s_{i-1}),\quad i=1,\cdots,n_s$$
as any subset such that 
$$
\Pr(s_i|s_1,\cdots,s_{i-1}) = \Pr(s_i|PA(s_i)).
$$
Similarly, we may define
\begin{align*}
    PA(a_i) \subseteq & (\bm{s},a_1, \cdots, a_{i-1}), & i=1,\cdots,n_a;\\
    PA(s_i') \subseteq & (\bm{u},s_1', \cdots, s_{i-1}'), & i=1,\cdots,n_s; \\
    PA(o_i) \subseteq & (\bm{u},\bm{s}', o_1, \cdots, o_{i-1}), & i=1,\cdots,n_o.
\end{align*}
Together, we will have that
\begin{equation*}\begin{aligned}
    \Pr(\bm{\delta}) =& \prod_{i=1}^{n_s}\Pr(s_i|PA(s_i))
        \prod_{j=1}^{n_a}\Pr(a_j|PA(a_j)) \\
        &\prod_{k=1}^{n_k}\Pr(s_k'|PA(s_k'))
        \prod_{l=1}^{n_o}\Pr(o_l|PA(o_l))\\
    =& \prod_{x \in \bm{\delta}}\Pr(x|PA(x)).
\end{aligned}\end{equation*}
Letting the edges in $\mathcal{G}$ be given by the parent sets defined above, it is obvious that $\mathcal{G}$ represents $\mathcal{Pr}$.

Now we assume that the independent transition and causal faithfulness hold.

Assume that $\mathcal{G}$ contains a lateral edge like $v_1 \rightarrow v_2$ for example. According to independent transition, we have $(v_1 \upmodels_{\Pr} v_2 | \bm{u})$. Because $v_1 \in PA(v_2)$, we have $(v_1 \nupmodels_{\mathcal{G}} v_2 | \bm{u})$. This violates the assumption of causal faithfulness, as we have
$$
(v_1 \upmodels_{\Pr} v_2 | \bm{u}) \not\Rightarrow (v_1 \upmodels_{\mathcal{G}} v_2 | \bm{u}).
$$
Therefore, we prove that no lateral edge among $\bm{v}$ exists in $\mathcal{G}$.

Because there is no lateral edge among $\bm{v}$ exists in $\mathcal{G}$, $\bm{u}_{-i}$ blocks all paths form $u_i$ to $v_j$ unless $u_i\in PA(v_j)$. Therefore, it is easy to prove that
$$
(u_i \nupmodels_{\mathcal{G}} v_j | \bm{u}_{-i}) \Leftrightarrow u_i\in PA(v_j)
$$
Combining Theorem \ref{theorem:d-separation} and Definition \ref{def:faithfulness}, we have that
$$
(u_i \nupmodels_{\mathcal{G}} v_j | \bm{u}_{-i}) \Leftrightarrow
(u_i \nupmodels_{\Pr} v_j | \bm{u}_{-i})
$$
Therefore, the parents of $v_j$ are uniquely identified by the rule:
$$
(u_i \nupmodels_{\Pr} v_j | \bm{u}_{-i}) \Leftrightarrow u_i\in PA(v_j)
$$

Finally, let us consider another probability function $\Pr_*$ that satisfies Eq. \ref{eq:consistent_pr}, and assume $\mathcal{G}$ is the DAG that $\Pr_*$ is compatible with and faithful to. For every $v_j \in \bm{v}$ it follows that
$$\Pr(v_j|\bm{u}) = \Pr_*(v_j|\bm{u}) = P(v_j|\bm{u})$$
If $u_i \in PA(v_j)$ and $u_i \not\in PA_*(v_j)$, using the above rule we have $(u_i \nupmodels_{\Pr} v_j | \bm{u}_{-i})$ and $(u_i \upmodels_{\Pr_*} v_j | \bm{u}_{-i})
$. In other words, we have
\begin{align*}
    \Pr(v_j|\bm{u}) &\neq \Pr(v_j|\bm{u}_{-i}) \\
    \Pr_*(v_j|\bm{u}) &= \Pr_*(v_j|\bm{u}_{-i})
\end{align*}
This leads to that
$$
\Pr(v_j|\bm{u}_{-i}) \neq \Pr_*(v_j|\bm{u}_{-i})
$$
We can also write that
\begin{align*}
    ~& \Pr(v_j|\bm{u}_{-i}) \\
    =& \int_{u_i} \Pr(v_j|\bm{u}) \Pr(u_i|\bm{u}_{-i}) \\
    =& \int_{u_i} \Pr_*(v_j|\bm{u}) \Pr(u_i|\bm{u}_{-i}) \\
    =& \int_{u_i} \Pr_*(v_j|\bm{u}_{-i}) \Pr(u_i|\bm{u}_{-i}) \\
    =& \Pr_*(v_j|\bm{u}_{-i}) \int_{u_i}  \Pr(u_i|\bm{u}_{-i}) \\
    =& \Pr_*(v_j|\bm{u}_{-i}).\\
\end{align*}
From the above equations, we obtain the paradox that
$$\Pr(v_j|\bm{u}_{-i}) = \Pr_*(v_j|\bm{u}_{-i}).$$
Using reduction to absurdity, we obtain that $u_i \in PA(v_j)$ implies $u_i \in PA_*(v_j)$. Similarly, we can prove the opposite direction of this implication. As a result, we have
$$u_i \in PA(v_j) \Leftrightarrow u_i \in PA_*(v_j),$$
which shows that $PA(v_j) = PA_*(v_j)$.

Proof ends.
\end{proof}
\section{Explanation through Causal Chains} \label{appendix:explain}

In our paper, we only present the visualization of causal chains. Although this visualization offers a certain extent of interpretability, Madumal et al \shortcite{madumal_explainable_2020} have proposed techniques to better use causal chains to generate high-quality explanations. In this section, we introduce how our causal chains adapt to their techniques.

Consider an $H$-step trajectory $(\bm{\delta}^t, \bm{\delta}^{t+1}, \cdots, \bm{\delta}^{t+H-1})$, where $\bm{\delta}^{t+k} = (\bm{s}^{t+k},\bm{a}^{t+k},\bm{s}^{t+k+1},\bm{o}^{t+k},r^{t+k})$. In this trajectory, we use the bold capital letter $\bm{C}$ to denote the sub-set of variables in the causal chain. In Madumal's work, an explanation is derived from an ``explanan", which contains information about how action leads to rewards.

\begin{definition}[Explanan] A $H$-step \textit{explanan} for an action $\bm{a}^t$ under a factual trajectory $(\bm{\delta}^t, \cdots, \bm{\delta}^{t+H-1})$ is a tuple $\langle \bm{x}_r, \bm{x}_h, \bm{x}_i \rangle$, where
\begin{enumerate}
    \item $\bm{x}_r$ contains the reward variables in the causal chain.
    \item $\bm{x}_h = \bm{s}^t \cap \bm{C}$ is the heading variables (the state variables at the beginning) in the causal chain.
    \item $\bm{x}_i\subset \bm{C} \setminus (\bm{x}_h, \bm{x}_i)$ contains some intermediate variables between $\bm{x}_h$ and $\bm{x}_r$.
\end{enumerate}
\end{definition}

An explanation for ``why the agent took action $\bm{a}^t$ at $\bm{s}^t$" is generated by filling the values in the explanan into a natural-language template. If $\bm{x}_i$ contains all intermediate variables, the explanan is called a \textit{complete explanan}. However, it may contain too much information and difficult to be understood.  Therefore, Madumal et al suggest using the \textit{minimally complete explanan} (MCE), where $\bm{x}_i$ contains only the parents of $\bm{x}_r$.

Taking the causal chain in the paper's Figure \ref{fig:buildmarine_chain} for example, we present the explanations respectively drawn from the complete explanan and the minimally complete explanan below.

\begin{example}[Complete Explanation for Build-Marine] The agent build supplies depots because this action causes the following changes:
\begin{enumerate}
    \item Instantly, the number of supply depots increases from 7 to 11, and money decreases from 505 to 330;
    \item After 2 steps, the number of barracks increases from 1 to 3, and the amount of money increases from 330 to 465.
    \item After 3 steps, the number of barracks increases from 3 to 6, and the amount of money decreases from 330 to 430.
    \item After 4 steps, the number of marines increases from 0 to 6.
\end{enumerate}
Which lead to a reward of $6$ due to new marines after 4 steps.
\end{example}

\begin{example}[Minimally Complete Explanation for Build-Marine] The agent builds supplies depots because this action would eventually cause the number of marines to increase from 0 to 6 after 4 steps, which leads to a reward of $6$ due to new marines.
\end{example}

In addition, by comparing two MCEs, we can construct contrastive explanations that answer why that agent did not take another action. Therefore, Madumal et al define the minimally complete contrastive explanation, which contains the difference between the factual MCE and the counterfactual MCE.

\begin{definition}[Minimally Complete Contrastive Explanation] Let $(\bm{\delta}^t, \cdots, \bm{\delta}^{t+H-1})$ denote the factual trajectory (the trajectory that actually happens) and $(\tilde{\bm{\delta}}^t, \cdots, \tilde{\bm{\delta}}^{t+H-1})$ denote the counterfactual trajectory, which is produced by replacing $\bm{a}^t$ with another action $\tilde{\bm{a}}^t$ and using the world model and policy to simulate the following $H$ steps. Assume that $\bm{x} = \langle \bm{x}_r, \bm{x}_h, \bm{x}_i \rangle$ is the MCE for $\bm{a}^t$ under the factual trajectory and that $\bm{y} = \langle \bm{y}_r, \bm{y}_h, \bm{y}_i \rangle$ is the MCE for $\tilde{\bm{a}}^t$ under the counterfactural trajectory. A \textit{minimally complete contrastive explanation} (MCCE) is then given by a tuple $\langle \bm{x}^{diff}, \bm{y}^{diff}, \bm{x}_r \rangle$, where
\begin{enumerate}
    \item $\bm{x}^{diff}$ is the subset of variables in $\bm{x}$ that 1) is not included in $\bm{y}$, or 2) owns a value different from that in $\bm{y}$.
    \item $\bm{y}^{diff}$ is the subset of variables in $\bm{y}$ that 1) is not included in $\bm{x}$, or 2) owns a value different from that in $\bm{x}$.
    \item $\bm{x}_r$ contains the reward variables in the factual causal chain.
\end{enumerate}
\end{definition}
\section{The Trade-off between interpretability and accuracy}
\label{appendix:proof_tradeoff}

\begin{theorem}
    Assume $v_j$ is an endogenous variable of the SCM of the Factorized MDP. Let $PA^1(v_j)$ and $PA^2(v_j)$ respectively denote its parent sets discovered using the threshold $\eta_1$ and $\eta_2$. Assume $PA^*(v_j)$ is the ground-truth parent set of $v_j$. If $\eta_1 \leq \eta_2$, with the well trained structural equation $f^j$ we have
    \begin{equation}
        \begin{aligned}
            \mathbbm{E}_{\bm{u}} & \left[ D_{KL}  \left(f^j(PA^*(v_j)) || f^j(PA^1(v_j))\right) \right] \geq \\
             & \mathbbm{E}_{\bm{u}} \left[ D_{KL}\left(f^j(PA^*(v_j))||f^j(PA^2(v_j))\right) \right]
        \end{aligned}
    \end{equation}
\end{theorem}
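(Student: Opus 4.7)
The plan is to reduce the inequality to the elementary fact that conditional entropy is monotone under additional conditioning, after first turning every $f^j$ into a true conditional distribution. Invoking the well-trained assumption, I would identify $f^j(X)$ with $\Pr(v_j \mid X)$ for whichever parent set $X \subseteq \bm u$ is supplied. Combining this with Theorem \ref{theorem:causal_discovery_fmdp} (the Markov property for the ground-truth parents) gives $f^j(PA^*(v_j)) = \Pr(v_j \mid PA^*(v_j)) = \Pr(v_j \mid \bm u)$. Since the text preceding the statement equates a smaller threshold with a sparser graph, $\eta_1 \leq \eta_2$ yields the inclusion $PA^1(v_j) \subseteq PA^2(v_j) \subseteq \bm u$, so the problem reduces to showing that enlarging the conditioning set cannot increase the average divergence from $\Pr(v_j \mid \bm u)$.

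The crux is then the identity
\[
\mathbbm{E}_{\bm u}\!\left[D_{KL}\bigl(\Pr(v_j \mid \bm u) \,\|\, \Pr(v_j \mid X)\bigr)\right] = H(v_j \mid X) - H(v_j \mid \bm u),
\]
valid for any $X \subseteq \bm u$. I would prove it by writing the divergence as a difference of a cross entropy and an entropy, and marginalizing out $\bm u \setminus X$ in the cross-entropy term: because $\log \Pr(v_j \mid X)$ depends on $\bm u$ only through $X$, the expectation $-\mathbbm{E}_{\bm u, v_j}[\log \Pr(v_j \mid X)]$ collapses to $-\mathbbm{E}_{X, v_j}[\log \Pr(v_j \mid X)] = H(v_j \mid X)$, and a parallel computation gives $-\mathbbm{E}[\log \Pr(v_j \mid \bm u)] = H(v_j \mid \bm u)$. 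With the identity in hand, I would invoke the textbook monotonicity \emph{``conditioning reduces entropy''} applied to $PA^1(v_j) \subseteq PA^2(v_j)$, subtract the common term $H(v_j \mid \bm u)$, and substitute $\Pr(v_j \mid \bm u) = \Pr(v_j \mid PA^*(v_j))$ back on both sides to recover the stated inequality.

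The main obstacle is not any of the manipulations above but rather pinning down a clean statement of the ``well-trained'' hypothesis that is consistent with Section \ref{sec:inference_networks}, where a single inference network $f^j$ is fit against whichever parent set causal discovery happens to return. I would read ``well-trained'' as: for each admissible input set $X$, the global minimum of $\mathcal{L}_{infer}$ is attained, which under sufficient expressivity forces $f^j(X) = \Pr(v_j \mid X)$ almost surely in $\bm u$. Without this reading, the two sides of the inequality compare outputs of two different trained networks and the statement loses its information-theoretic content; with it, the proof collapses to the short computation sketched above.
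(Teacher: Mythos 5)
Your proof is correct, and it rests on the same underlying fact as the paper's own argument --- the gap between the two expected divergences is a nonnegative conditional mutual information --- but you organize the reduction differently. The paper partitions $PA^1(v_j)$, $PA^2(v_j)$, and $PA^*(v_j)$ into five disjoint pieces $\bm a,\bm b,\bm c,\bm d,\bm e$, discards the false parents $\bm d,\bm e$ by asserting that the well-trained $f^j$ on inputs $(\bm a,\bm d)$ outputs $\Pr(v_j\mid \bm a)$ (i.e., that conditioning on false parents leaves the posterior unchanged), and then manipulates the integrals directly until the difference collapses to $\mathbbm{E}_{\bm a,\bm b}\left[ D_{KL}\left(f^j(\bm a,\bm b)\,\|\,f^j(\bm a)\right)\right]\geq 0$. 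You instead prove the identity $\mathbbm{E}_{\bm u}\left[D_{KL}\left(\Pr(v_j\mid\bm u)\,\|\,\Pr(v_j\mid X)\right)\right]=H(v_j\mid X)-H(v_j\mid\bm u)$ for arbitrary $X\subseteq\bm u$ and finish with ``conditioning reduces entropy'' applied to $PA^1(v_j)\subseteq PA^2(v_j)$. Your route buys two things: it never needs to separate true from false parents, and it sidesteps the paper's intermediate claim $\Pr(v_j\mid\bm a,\bm d)=\Pr(v_j\mid\bm a)$, which actually requires $v_j\upmodels\bm d\mid\bm a$ --- a condition the discovery rule does not guarantee, since the false parents $\bm d$ may still carry information about the omitted true parents $\bm b,\bm c$ --- whereas your reading of ``well-trained,'' namely $f^j(X)=\Pr(v_j\mid X)$ for the input set $X$ the network is actually fed, is exactly what maximizing $\mathcal{L}_{infer}$ delivers at the optimum. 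What the paper's version buys in exchange is an explicit identification of which variables ($\bm b$, the true parents regained by raising the threshold) account for the accuracy gain. Both arguments share the premises you flagged: the identification of $f^j$ with a true conditional, $f^j(PA^*(v_j))=\Pr(v_j\mid\bm u)$ via the Markov property, and the monotonicity $\eta_1\leq\eta_2\Rightarrow PA^1(v_j)\subseteq PA^2(v_j)$, which the paper likewise takes as ``obvious.''
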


\begin{proof}
    Obviously, we have $PA^1(v_j) \subseteq PA^2(v_j)$. For simplicity, we define
    \begin{align*}
        \bm{a} &:= PA^1(v_j) \cap PA^*(v_j),\\
        \bm{b} &:= (PA^2(v_j) \cap PA^*(v_j)) \setminus PA^1(v_j),\\
        \bm{c} &:= PA^*(v_j) \setminus PA^2(v_j),\\
        \bm{d} &:= PA^1(v_j) \setminus \bm{a}, \\
        \bm{e} &:= PA^2(v_j) \setminus (\bm{a, b, d}),
    \end{align*}
    where $\bm{a,b,c,d,e}$ are non-overlapping subsets of $\bm{u}$. More specifically, $\bm{a,b,c}$ are the true parents of $v_j$, whereas $\bm{d,e}$ are false parents of $v_j$. Then we have
    \begin{align*}
        PA^*(v_j) &= (\bm{a}, \bm{b}, \bm{c}),\\
        PA^1(v_j) &= (\bm{a}, \bm{d}),\\
        PA^2(v_j) &= (\bm{a}, \bm{b}, \bm{d}, \bm{e}),\\
    \end{align*}
    We use $\sim$ to denote that a probability conforms to a given distribution. Noted that variables in $\bm{d}$ are not true parents of $v_j$, with well trained $f^j$, the posterior distribution of $v_j$ is given by
    \begin{equation}
        \mathrm{Pr}(v_j|\bm{a}) = \mathrm{Pr}(v_j|\bm{a}, \bm{d}) \sim f^j(\bm{a}, \bm{d}) = f^j(PA^1(v_j))
    \end{equation}
    Therefore, we have
    \begin{equation}\begin{aligned}
         & \mathbbm{E}_{\bm{u}} \left[ D_{KL}\left(f^j(PA^*(v_j)) || f^j(PA^1(v_j))\right) \right] \\
        =& \mathbbm{E}_{\bm{u}} \left[ D_{KL}\left(f^j(\bm{a}, \bm{b}, \bm{c}) || f^j(\bm{a}, \bm{d})\right) \right] \\
        =& \int_{\bm{u}} \mathrm{Pr}(\bm{u}) \mathrm{d} \bm{u} \int_{v_j}  \mathrm{Pr}(v_j|\bm{a}, \bm{b}, \bm{c}) \log \frac{\mathrm{Pr}(v_j|\bm{a}, \bm{b}, \bm{c})}{\mathrm{Pr}(v_j|\bm{a}, \bm{d})} \mathrm{d}v_j \\
        =& \int_{\bm{a,b,c}} \mathrm{Pr}(\bm{a,b,c}) \mathrm{d}(\bm{a,b,c}) \int_{v_j}  \\ 
         & \qquad \qquad \mathrm{Pr}(v_j|\bm{a}, \bm{b}, \bm{c}) \log \frac{\mathrm{Pr}(v_j|\bm{a}, \bm{b}, \bm{c})}{\mathrm{Pr}(v_j|\bm{a})} \mathrm{d}v_j \\
        =& \int_{\bm{a,b,c},v_j} \mathrm{Pr}(\bm{a}, \bm{b}, \bm{c}, v_j) \log \frac{\mathrm{Pr}(v_j|\bm{a}, \bm{b}, \bm{c})}{\mathrm{Pr}(v_j|\bm{a})} \mathrm{d}(\bm{a,b,c},v_j)
    \end{aligned}\end{equation}
    Similarly, we have
    \begin{equation}\begin{aligned}
         & \mathbbm{E}_{\bm{u}} \left[  D_{KL}\left(f^j(PA^*(v_j)) || f^j(PA^2(v_j))\right) \right] \\
        =& \mathbbm{E}_{\bm{u}} \left[ D_{KL}\left(f^j(\bm{a}, \bm{b}, \bm{c}) || f^j(\bm{a,b,d,e})\right) \right] \\
        =& \int_{\bm{a,b,c},v_j} \mathrm{Pr}(\bm{a}, \bm{b}, \bm{c}, v_j) \log \frac{\mathrm{Pr}(v_j|\bm{a}, \bm{b}, \bm{c})}{\mathrm{Pr}(v_j|\bm{a,b})} \mathrm{d}(\bm{a,b,c},v_j)
    \end{aligned}\end{equation}
    Together, we write
    \begin{equation}\begin{aligned}
        & \mathbbm{E}_{\bm{u}} \left[ D_{KL} \left(f^j(PA^*(v_j)) || f^j(PA^1(v_j))\right) \right] \\
        & \qquad \qquad - \mathbbm{E}_{\bm{u}} \left[ D_{KL}\left(f^j(PA^*(v_j))||f^j(PA^2(v_j))\right) \right] \\
        =& \int_{\bm{a,b,c},v_j} \mathrm{Pr}(\bm{a}, \bm{b}, \bm{c}, v_j) \log \frac{\mathrm{Pr}(v_j|\bm{a}, \bm{b})}{\mathrm{Pr}(v_j|\bm{a})} \mathrm{d}(\bm{a,b,c},v_j) \\
        =& \int_{\bm{a,b},v_j} \int_{\bm{c}}(\mathrm{Pr}(\bm{a}, \bm{b}, \bm{c}, v_j)\mathrm{d}\bm{c}) \log \frac{\mathrm{Pr}(v_j|\bm{a}, \bm{b})}{\mathrm{Pr}(v_j|\bm{a})} \mathrm{d}(\bm{a,b},v_j) \\
        =& \int_{\bm{a,b},v_j} \mathrm{Pr}(\bm{a}, \bm{b}, v_j) \log \frac{\mathrm{Pr}(v_j|\bm{a}, \bm{b})}{\mathrm{Pr}(v_j|\bm{a})} \mathrm{d}(\bm{a,b},v_j) \\
        =& \int_{\bm{a,b},v_j} \mathrm{Pr}(\bm{a}, \bm{b}) \mathrm{Pr}(v_j|\bm{a}, \bm{b})  \log \frac{\mathrm{Pr}(v_j|\bm{a}, \bm{b})}{\mathrm{Pr}(v_j|\bm{a})} \mathrm{d}(\bm{a,b},v_j) \\
        =& \int_{\bm{a,b}} \mathrm{Pr}(\bm{a}, \bm{b}) D_{KL}\left(f^j(\bm{a,b})||f^j(\bm{a}))\right) \mathrm{d}(\bm{a,b})  \\
        =& \ \mathbbm{E}_{\bm{a,b}} \left[ D_{KL}\left(f^j(\bm{a,b})||f^j(\bm{a}))\right) \right] \\
        \geq & \ 0
    \end{aligned} \end{equation}
\end{proof}
\section{The Algorithm for Model-Based RL} \label{appendix:alg_mbrl}

See Algorithm \ref{alg:mbrl}.

\begin{algorithm}[h]
\caption{Model-based RL using causal world models}
\begin{algorithmic}[1]
    \label{alg:mbrl}
    \STATE{Initialize environment models and the policy $\pi(\bm{a}|\bm{s})$}
    \FOR{training epoch $i = 1,...,n_{epoch}$}
        \STATE{Interact with the environment using $\pi$; Add transitions into buffer $\mathcal{D}$}
        \STATE (for every $n_{graph}$ epochs) Update the causal graph through causal discovery
        \STATE Fit the structural equations by maximizing (\ref{eq:loss})
        \FOR{learning round $j = 1,...,n_{round}$}
            \STATE{Generate simulated data by performing $k$-step model rollout using actor $\pi$}
            \STATE{Update the policy $\pi$ using PPO algorithm}
        \ENDFOR
    \ENDFOR
\end{algorithmic}
\end{algorithm}
\section{Environments}

\subsection{Factorization of Public Environments}

This section describes how environments are considered Factorized MDPs in our implementation.

\paragraph{Cartpole} The state is factorized into 4 variables $(x, \dot{x}, \theta, \dot\theta)$ indicating the 1) position of the cart, 2) velocity of the cart, 3) angle of the pole, and 4) angle velocity of the pole. The action only contains one discrete variable indicating the direction (left or right) to push the cart. The goal of the agent is to keep the pole upright as long as possible. Therefore, the agent is rewarded by $1$ for each step as long as the state satisfies $-2.4 \leq x \leq 2.4$ and $-12^\circ \leq \theta \leq 12^\circ$. The episode terminates if this condition does not hold. In addition, no outcome variable is included in this environment.

\textbf{Lunarlander-Discrete} The state is factorized into 7 variables $(x, y, \dot{x}, \dot{y}, \theta, \dot{\theta}, legs)$ indicating the (1) horizontal position, (2) vertical position, (3) horizontal velocity, (4) vertical velocity, (5) angle, (6) angle velocity, and (7) whether the two legs are in contact with the ground. The action contains one discrete variable indicating the engine (none, main, left, or right) to be actuated. The environment contains 3 outcome variables, including (1) the fuel cost due to firing the engine, (2) whether the lander crashed, and (3) whether the rocket is resting. The agent is rewarded (or penalized) from multiple sources, including (1) shortening the distance to the destination, (2) reducing the velocity, (3) reducing the angle, (4) increasing the number of landed legs, (5) being resting, and (6) crashing.

\textbf{Lunarlander-Continuous} replaces the action space of Lunarlander-Discrete with a continuous action space. The new action space includes 2 continuous variables ranged in $(-1, 1)$, respectively controlling the throttles of the main and the lateral engines.

\textbf{Build Marine} The state includes 6 variables denoted as $(n_{wk}, n_{mr}, n_{br}, n_{dp}, money, time)$, namely the (1) number of workers, (2) number of marines, (3) number of barracks, (4) number of supply depots, (5) amount of money, and (6) game time. The action includes only one discrete variable indicating which unit (workers, marines, barracks, supply depots, or none) to be built. The player is rewarded with $1$ for every newly produced marine. The problem is challenging since marines can only be built from barracks, and barracks can only be built provided there exists at least one supply depot. In addition, the number of workers and marines is limited by the number of supplies provided by supply depots.

\subsection{The Environment for Measuring the Accuracy of Recovering Action Influence}
\label{appendix:env_aimtest}

To measure the accuracy of recovering the causal dependencies of the AIM, we design an additional environment with a known ground-truth AIM. The environment contains one action variable $a$ and 5 state variables: $x_1, x_2, x_3, x_4,$ and $\tau$. The action variable $a$ is chosen from 4 options $\{0, 1, 2, 3\}$. The dynamics of these state variables are given by
\begin{align}
    x_1' &= x_1 + \mathcal{N}(1, 1) \\
    x_2' &= \left\{\begin{array}{lr}
         x_1, & a = 0  \\
         x_2, & \textrm{otherwise}
    \end{array}\right. + \mathcal{N}(0, 1) \\
    x_3' &= x_3 + \left\{\begin{array}{lr}
         x_1, & a = 0  \\
         x_2, & a = 1 \\
         5, & a = 2 \\
         10, & a = 3 \\
    \end{array}\right. + \mathcal{N}(0, 1) \\
    x_4' &= 0.1 x_3 +  0.9 x_4 + \mathcal{N}(0, 0.5) \\
    \tau' &=  \tau + \left\{\begin{array}{lr}
         10, & a = 0  \\
         20, & a = 1 \\
         5, & a = 2 \\
         5, & a = 3 \\
    \end{array}\right.
\end{align}

The ground-truth causality of the AIM can be easily derived from the above dynamics. Aiming to confuse neural networks, these dynamics create copious spurious correlations in the data. For example, $x_1$ and $\tau$ present a strong positive correlation, whereas neither of them is the other's causation. 
\section{Importance of Causality: an Example} \label{appendix:causality_case_study}

To show how causal discovery facilitates explanation generation, we investigated the causal chains produced without a discovered causal graph. Figure \ref{fig:chain_full} presents such a causal chain for Build-Marine. It is produced by our model using a full graph, where unreasonable connections are highlighted in bold and red. We found the model attributes a large influence weight ($\approx 0.78$) to $time$ as the salient parent of $n_{depot}$ under the action of building supply depots. This leads to the incorrect understanding that ``if we build supply depots, the number of supply depots turns to $4$ because the game time is $140$". In fact, the variable $time$ only influences $time'$ in the discovered causal graph presented in Figure \ref{fig:buildmarine_graph}, meaning the number of supply depots is not caused by the game time at all. The model is misled by the fact that the agent always builds more supply depots as time goes on, which makes the two variables ($time$ and $n_{depot}'$) highly related. Similarly, the model wrongly takes $time$ as the salient parent of $money$ under the actions of building barracks since it observes that money usually accrues with time.

\begin{figure}[h]
    \centering
    \includegraphics[width=8.5cm]{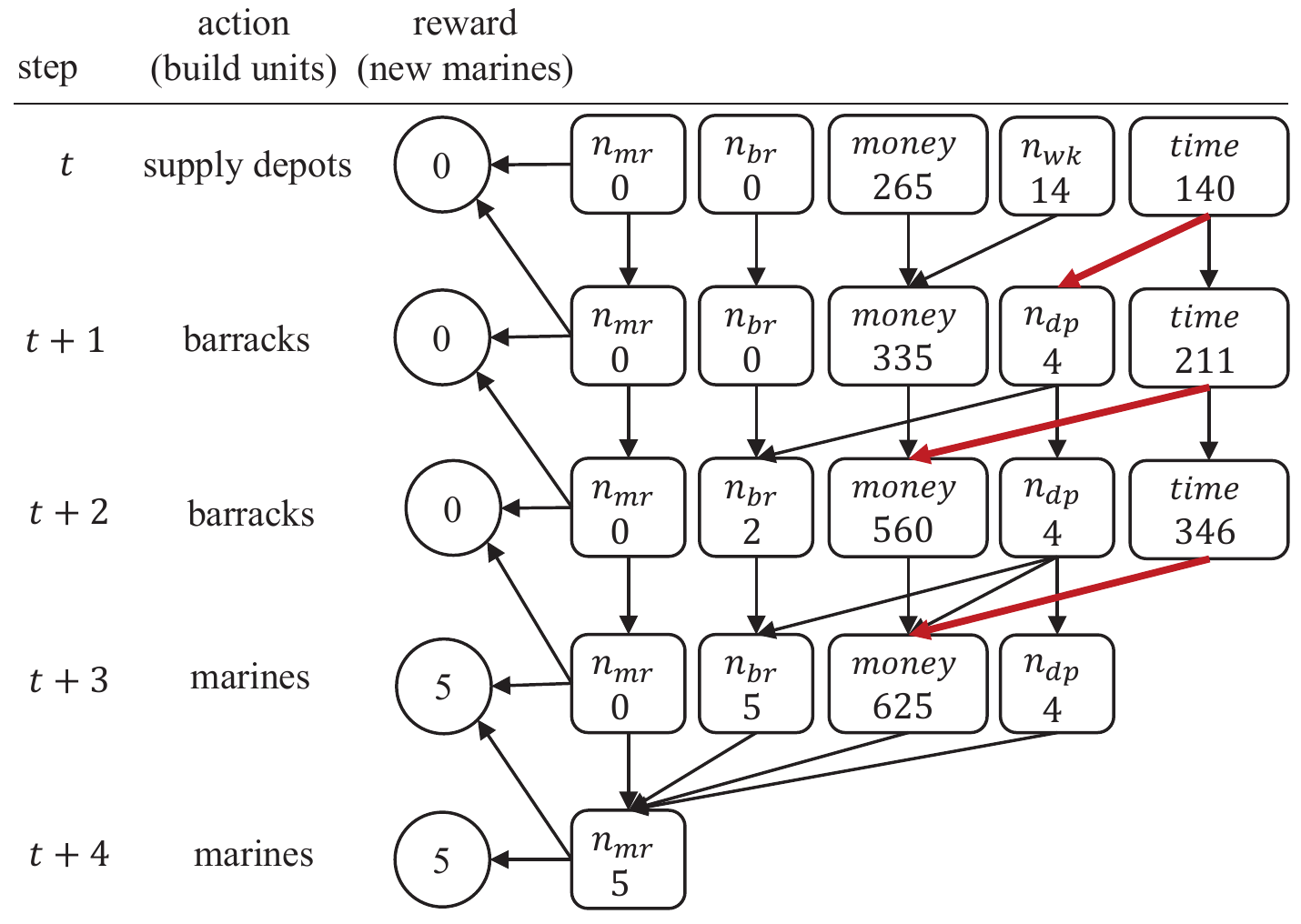}
    \caption{An example of the causal chain produced by a full causal graph. We highlight the ``problematic" edges in bold and red.}
    \label{fig:chain_full}
\end{figure}

Therefore, attention alone is insufficient to obtain reasonable causal chains for explanations, as it can be easily misled by spurious correlations (which are rife in the non-i.i.d. data collected in RL). Fortunately, these errors in causal chains are greatly reduced when combining attention with a causal graph, which precludes most spurious correlations and leads to correct causal chains.

\section{Hyper-parameters of Model-based RL}

\begin{table*}[tb]
    \centering
    \begin{tabular}{l|cccc}
        \hline
        Parameter  & Cartpole & LunarLander & LunarLander-Continuous & Build-Marine \\
        \hline
        total epochs ($n_{epoch}$) & 50 & 200 & 200 & 200 \\
        environment steps per epoch  & 800 & 2400 & 2400 & 120 \\
        policy-update rounds per epoch ($n_{round}$) & 20 & 20 & 20 & 20 \\
        epochs per graph update ($n_{graph}$)  & $3\rightarrow 5$ & $3\rightarrow 5$ & $3\rightarrow 5$ & $3\rightarrow 5$  \\
        rollout length & $1\rightarrow 5$ & $1\rightarrow 5$ & $1\rightarrow 5$ & $5\rightarrow 10$\\
        rollout samples for policy update & 4096 & 4096 & 4096 & 2048 \\
        causal threshold ($\eta$) & 0.1 & 0.3 & 0.4 & 0.15 \\
        model-ensemble size & 5 & 5 & 5 & 5\\
        discount factor ($\gamma$) & 0.98 & 0.975 & 0.975 & 0.97 \\
        \hline
    \end{tabular}
    \caption{main parameters used in model-based RL. The form ``$a\rightarrow b$" denotes the parameter gradually changes from $a$ to $b$ during the training process.}
    \label{tab:hyper-parameters}
\end{table*}

The main hyper-parameters used in the mentioned environments for model-based RL are presented in Table \ref{tab:hyper-parameters}.
\section{Computational Complexity}
Let $n = max(n_s + n_a, n_s + n_o)$ roughly denote the number of variables of the environment. Let $N$  denote the total number of transition samples. Let $b$ denote the batch size.

\paragraph{Model} The parameter scale of our model is $O(n)$. The time complexity of one forward pass is $O(n^2b)$.

\paragraph{Causal discovery} The time complexity of testing each edge through FCIT is $O(n N \log N)$ and the overall time complexity of causal discovery is $O(n^3 N \log N)$. The space complexity for causal discovery is $O(nN)$ if the tests are sequentially performed.

\paragraph{Causal chains} The time complexity and space complexity of generating an $H$-step causal chain are both $O(n^2H)$. In our experiments, the generation of each causal chain completes almost instantly.

\end{document}